\documentclass{article}

\usepackage{microtype}
\usepackage{graphicx}
\usepackage{subcaption}
\usepackage{booktabs} % for professional tables  
\usepackage[utf8]{inputenc}
\usepackage{textcomp}
\usepackage{siunitx}
\usepackage{longtable,tabularx}
\usepackage{algorithm}
\usepackage{comment}
\usepackage{upgreek}
\usepackage{booktabs}
\usepackage{setspace}
\usepackage{amsfonts}
\usepackage{natbib}
\usepackage{bigints}
\usepackage{xspace}
\usepackage{hyperref}
\usepackage{amsmath}
\usepackage{amssymb}
\usepackage{mathtools}
\usepackage{amsthm}
\usepackage{algorithm}
\usepackage{algorithmic}
\usepackage[capitalize,noabbrev]{cleveref}
\usepackage{booktabs}
\usepackage{pifont}
\usepackage[table]{xcolor}

\newcommand{\cmark}{\ding{51}}

\DeclareMathOperator{\tr}{tr}

\usepackage[accepted]{icml2026}

\theoremstyle{plain}
\newtheorem{theorem}{Theorem}[section]

\newtheorem{lemma}[theorem]{Lemma}

\theoremstyle{definition}

\theoremstyle{remark}
\newtheorem{remark}[theorem]{Remark}

\icmltitlerunning{PAMD: Structured Adaptive Distances for Bisimulation Representations in Visual Reinforcement Learning}

\graphicspath{{images/}}

\begin{document}

\twocolumn[
  \icmltitle{PAMD: Structured Adaptive Distances for Bisimulation Representations in Visual Reinforcement Learning}

  % It is OKAY to include author information, even for blind submissions: the
  % style file will automatically remove it for you unless you've provided
  % the [accepted] option to the icml2026 package.

  % List of affiliations: The first argument should be a (short) identifier you
  % will use later to specify author affiliations Academic affiliations
  % should list Department, University, City, Region, Country Industry
  % affiliations should list Company, City, Region, Country

  % You can specify symbols, otherwise they are numbered in order. Ideally, you
  % should not use this facility. Affiliations will be numbered in order of
  % appearance and this is the preferred way.
  \icmlsetsymbol{equal}{*}

  \begin{icmlauthorlist}
    \icmlauthor{Daegyeong Roh}{equal,lics}
    \icmlauthor{Juho Bae}{equal,lics}
    \icmlauthor{Han-Lim Choi}{lics}
  \end{icmlauthorlist}

  \icmlaffiliation{lics}{Department of Aerospace Engineering, Korea Advanced Institute of Science and Technology, Daejon, Republic of Korea}
  % \icmlaffiliation{comp}{Company Name, Location, Country}
  % \icmlaffiliation{sch}{School of ZZZ, Institute of WWW, Location, Country}

  \icmlcorrespondingauthor{Han-Lim Choi}{hanlimc@kaist.ac.kr}
  % You may provide any keywords that you find helpful for describing your
  % paper; these are used to populate the "keywords" metadata in the PDF but
  % will not be shown in the document
  \icmlkeywords{Machine Learning, ICML}

  \vskip 0.3in
]

% this must go after the closing bracket ] following \twocolumn[ ...

% This command actually creates the footnote in the first column listing the
% affiliations and the copyright notice. The command takes one argument, which
% is text to display at the start of the footnote. The \icmlEqualContribution
% command is standard text for equal contribution. Remove it (just {}) if you
% do not need this facility.

% Use ONE of the following lines. DO NOT remove the command.
% If you have no special notice, KEEP empty braces:
% \printAffiliationsAndNotice{}  % no special notice (required even if empty)
% Or, if applicable, use the standard equal contribution text:
\printAffiliationsAndNotice{\icmlEqualContribution}

\begin{abstract}
  Many visual reinforcement learning (RL) algorithms learn representations by matching latent distances to a behavioral distance induced by reward and transition similarity. In practice, the choice of the latent distance can strongly affect performance: using fixed, pre-specified global norms (e.g., $\ell_p$ norms or other hand-designed metrics) may be overly restrictive to capture the behavioral distance. In contrast, unconstrained pairwise distances may admit degenerate solutions that drive the metric loss down without improving the representation. To address this gap, we introduce \emph{PAMD: Pairwise Adaptive Mahalanobis Distance}\footnote{Code available at \url{https://github.com/daegyeongroh/PAMD}.}, which parameterizes a positive-definite, pair-conditioned metric for measuring latent state similarity. 
  PAMD is a simple plug-in for existing bisimulation-based methods, offering a more expressive yet structured alternative to fixed, pre-specified latent distances. We empirically validate our method on visual MuJoCo continuous-control tasks, where final performance of several recent bisimulation-based RL algorithms is substantially improved when equipped with the distance we propose.
\end{abstract}
\section{Introduction}

The performance of deep reinforcement learning (RL) algorithms provided with image observations is often dominated by a low-dimensional representation of the environment\cite{hafner2019dreamer,laskin2020drq,yarats2021drqv2}. Several lines of work in obtaining good representations include providing auxiliary tasks and self-supervision objectives~\cite{pathak2017curiosity}; enforcing reconstruction objectives~\cite{yarats2021improving}; and use of contrastive loss for augmentation pairs~\cite{laskin2020curl}. 

Recently, a separate line of work studies \emph{behavioral metrics} that quantify state similarity in a Markov Decision Process (MDP) through \emph{reward} and \emph{transition} similarity. Representative examples include the bisimulation-based DBC~\cite{zhang2021deep}, MICo~\cite{castro2021mico}, and SimSR~\cite{zang2022simsr}. While they share the common goal of capturing behavioral state similarity, they adopt different instantiations of the underlying behavioral pseudometric and different choices of latent-space dissimilarity used for representation learning (e.g., $\ell_p$ norms versus cosine-based measures). Despite such modeling choices, they share a common principle: two states are behaviorally similar if they yield similar immediate rewards and induce similar distributions over next states (under the same action or policy).

Formally, these approaches typically define a behavioral pseudometric $d$ as the fixed point of a Bellman-style operator that combines a reward discrepancy term with a transition-discrepancy functional:
\begin{align} \label{eq:generic_behavioral_distance}
  &d \;=\; \mathcal{T}(d), \\
  &(\mathcal{T}d)(s,s')\;=\;\mathcal{A}\!\big(\rho_r(s,s')\;+\;\gamma\,\mathcal{D}\!\left(P(\cdot|s),\,P(\cdot|s');\,d\right)\big), \nonumber
\end{align}
where $\rho_r$ measures immediate reward mismatch, $\mathcal{A}$ aggregates action dependence (e.g., a $\max_a$ operator or an on-policy expectation), and $\mathcal{D}$ quantifies discrepancy between transition distributions in a manner that may depend on $d$ (e.g., a Wasserstein/IPM-style discrepancy or simpler couplings used for tractability).

Accordingly, encoders are trained to map observations to a latent space in which a \emph{chosen} dissimilarity $\delta$ (e.g., an $\ell_p$ norm~\cite{zhang2021deep} or cosine distance~\cite{zang2022simsr}) between representations aligns with the target behavioral pseudometric:
$\delta(\phi(s_1),\phi(s_2)) \approx d(s_1,s_2)$.
Viewed this way, learning the representation (and, potentially, the latent-space dissimilarity $\delta$) can be cast as a metric embedding problem.

A natural way to formalize the above representation objective is through the lens of \emph{metric embedding}:
given a (pseudo)metric $d$ over MDP states, we seek a map $\phi : \mathcal{X}\to\mathbb{R}^m$ such that $\delta\left(\phi(s), \phi(s')\right)$ approximates $d(s,s')$ up to small distortion. This perspective is particularly relevant in deep RL since neural representations live in a finite-dimensional Euclidean space, and the quality of generalization induced by an encoder depends critically on how well the target behavioral distance can be embedded in such a space.

\cite{castro2023kernel} makes this connection explicit by developing a kernel-based theory of behavioural distances. A positive-definite kernel whose induced distance $d^{\pi}_{\mathrm{ks}}$ is introduced and shown to coincide with the \emph{reduced} MICo distance $\Pi U^\pi$ (Theorem~15). Furthermore, it is proved that $d^{\pi}_{\mathrm{ks}}$ (equivalently, $\Pi U^\pi$) admits a low-distortion embedding into a finite-dimensional Euclidean space (Theorem~19). These results highlight that, beyond defining a behavioural distance, representation learning hinges on the interaction between the target behavioural geometry and the \emph{prescribed} latent dissimilarity used to embed it.

Yet, existing behavioural-metric methods typically treat the latent dissimilarity $\delta$ as a design choice fixed a priori (e.g., $\ell_p$ norms~\cite{zhang2021deep} or MICo distance~\cite{castro2021mico}).
Consequently, they rely on an implicit assumption: the target behavioural pseudometric is sufficiently well-\emph{embeddable} under that particular geometry. This leaves open a complementary lever---\emph{choosing or adapting} $\delta$ itself---which we explore in this work.

Motivated by this perspective, we ask a complementary question: rather than committing to a single, fixed choice of $\delta$, can we \emph{shape} the latent dissimilarity to improve embeddability for the behavioral distances encountered in high-dimensional visual RL? In principle, one could make $\delta$ fully learnable via an unconstrained pairwise network to maximize flexibility. In practice, however, such flexibility can admit degenerate solutions in which the distance head $\delta$ absorbs the supervision and matches the target pseudometric without inducing a correspondingly structured encoder, undermining the intended behavioral organization of the representation. This suggests that improving embeddability requires not arbitrary flexibility, but a carefully chosen inductive bias on $\delta$.

In this work, we propose \emph{PAMD: Pairwise Adaptive Mahalanobis Distance}, a
\emph{structured yet adaptive} latent dissimilarity $\delta$ parameterized as the
square root of a \emph{pair-conditioned positive-definite quadratic form}:
\begin{equation}
  \delta(z,z') \;=\; \sqrt{(z - z')^\top G(z,z') (z - z')}, \quad G(z,z')\succ 0.
\end{equation}
This can be viewed as a pairwise-conditioned local Mahalanobis geometry; Appendix~\ref{app:geometric_motivation} provides an additional geometric interpretation of this pair-conditioned quadratic form as a local surrogate for spatially varying latent geometry. This choice occupies a natural middle ground between fixed $\ell_p$ geometries and fully learned similarity heads: it is expressive enough to capture context-dependent anisotropy, yet sufficiently structured to keep the representation learning problem well-posed. We empirically show that this design improves both metric fitting and downstream RL performance across visual control tasks, and we provide ablations demonstrating that the proposed structure reduces representation under-training\textemdash{}where the distance module matches the metric target while the encoder remains weakly shaped\textemdash{}compared to unconstrained learned dissimilarities.

\section{Related Work}
\label{sec:related_work}

\paragraph{Behavioral distances and fixed-point targets.}
Bisimulation admits a behavioral-distance view as the fixed point of a Bellman-style operator, providing a graded notion of state similarity beyond binary equivalence \cite{ferns2004metrics}.
Later work studied policy-dependent variants that define the target distance under a fixed policy, while preserving a contractive fixed-point structure \cite{castro2020scalable}.

\paragraph{Metric embedding under fixed-point supervision.}
In deep reinforcement learning, the practical challenge is \emph{metric embedding}: learning representations in which a tractable latent comparator aligns with a fixed-point behavioral target.
DBC is an early scalable instantiation of this idea, training latent representations by regressing a fixed $\ell_1$ distance in the embedding space toward a target constructed from rewards and transition dynamics \cite{zhang2021deep}.
Subsequent work highlighted that representation quality is highly sensitive to the \emph{distance class} used for this regression.
\cite{agarwal2021robustbisim} analyzes failure modes such as representation collapse and uncontrolled scaling, arguing that the inductive bias and scale behavior of the embedding distance can materially affect alignment with the fixed-point target and yield suboptimal embeddings.
\paragraph{Beyond norm-based embeddings: diffuse metrics and collapse-aware comparators.}
\cite{castro2021mico} introduces MICo, noting that once the behavioural target is learned from sampled transitions, enforcing a strict metric (in particular, $d(s,s)=0$) becomes structurally mismatched with the stochastic, sample-based fixed-point target.
They therefore adopt a \emph{diffuse} distance that allows non-zero self-distance and implement it with the MICo distance
\begin{equation}
U_(s, s')
\;=\;
\frac{ \lVert \phi(s) \rVert_2^2 + \lVert \phi(s') \rVert_2^2 }{2}
\;+\;
\beta \, \theta\!\bigl(\phi(s), \phi(s')\bigr),
\label{eq:mico_pair_distance}
\end{equation}
where the norm terms act as learnable self-distances. This formulation improves compatibility with diffuse behavioral targets by relaxing the strict zero self-distance constraint. This line of work emphasizes that the fixed-point target may not align with an embedding comparator that enforces $d(s,s)=0$, and that relaxing this constraint can broaden representational expressivity.

\cite{zang2022simsr} proposes SimSR, which instantiates a MICo-style Bellman fixed-point operator with a cosine-distance comparator,
\begin{equation}
d_{\cos}(\phi(s),\phi(s')) 
= 1 - \frac{\phi(s)^\top \phi(s')}{\|\phi(s)\|\,\|\phi(s')\|}
\label{eq:simsr_pair_distance}
\end{equation}

thereby constraining embeddings to the unit sphere and ensuring zero self-distance by design, aiming to mitigate representation collapse while preserving the same fixed point as MICo.
\section{Preliminaries} 
\paragraph{MDP.} 
We consider a discounted Markov decision process (MDP) $\mathcal{M}=(\mathcal{S},\mathcal{A},P,r,\gamma)$, where $\mathcal{S}$ is the state space, $\mathcal{A}$ is the action space, $P(\cdot\mid s,a)$ is the transition kernel, $r(s,a)$ is the expected immediate reward, and $\gamma\in[0,1)$ is the discount factor. At time $t$, the agent observes $s_t\in\mathcal{S}$, samples an action $a_t\sim\pi(\cdot\mid s_t)$ under a policy $\pi$, receives reward $r_t=r(s_t,a_t)$, and transitions to $s_{t+1}\sim P(\cdot\mid s_t,a_t)$. The agent’s objective is to maximize the expected discounted return $\max_\pi \mathbb{E}_\pi\!\left[\sum_{t=0}^{\infty}\gamma^t r(s_t,a_t)\right]$. In our visual setting, we treat each $s\in\mathcal{S}$ as the agent’s observation. 

\paragraph{Notations.} We use $\|\cdot\|_p$ to denote $\ell_p$ norms, and $\langle\cdot,\cdot\rangle$ to denote the standard inner product. We define the policy-induced transition and reward as 
\[
\begin{aligned}
P^\pi(\cdot\mid s)
&\coloneqq \mathbb{E}_{a\sim\pi(\cdot\mid s)}
\big[P(\cdot\mid s,a)\big],\\
r^\pi(s)
&\coloneqq \mathbb{E}_{a\sim\pi(\cdot\mid s)}
\big[r(s,a)\big].
\end{aligned}
\]
and write $\mathbb{E}_\pi[\cdot]$ for expectations over trajectories generated by $\pi$ and $P$. To learn compact features from high-dimensional observations, we use an encoder $\phi:\mathcal{S}\to\mathbb{R}^n$ and denote the latent representation by $z=\phi(s)$ (and $z_t=\phi(s_t)$ along a trajectory). For sampled data (e.g., off-policy training), we denote by $\mathcal{D}$ a replay buffer or dataset of transitions $(s_t,a_t,r_t,s_{t+1})$, and by $B\sim\mathcal{D}$ a mini-batch drawn from it.
\paragraph{Behavioral distances as fixed points.}
A common thread in \emph{bisimulation-style} approaches is to define a behavioral (pseudo)metric as the unique fixed point of a Bellman-style operator acting on functions over state pairs, following the classical bisimulation-metric line of work~\cite{ferns2004metrics}.
Concretely, for a wide class of constructions one can write
\[
d \;=\; \mathcal{T}(d),
\]
where $\mathcal{T}$ combines (i) an immediate reward discrepancy and (ii) a discrepancy between transition distributions measured with respect to the current candidate distance.
Different behavioral distances arise from different choices of (a) how action dependence is aggregated (e.g., $\max_a$ versus on-policy expectation) and (b) how transition discrepancies are measured (e.g., optimal-transport couplings versus tractable relaxations, or latent-space measurements).
Below we summarize three representative instances widely used in deep RL.

\smallskip
\noindent\textbf{Bisimulation metric.}
One canonical choice is the (action-conditional) bisimulation operator $\mathcal{T}_{\mathrm{bis}}$:
\begin{align} \label{eq:bisim_operator}
  (\mathcal{T}_{\mathrm{bis}} d)(s,s') \coloneqq \max_{a\in\mathcal{A}} \Big(&|r(s,a)-r(s',a)| \\ \nonumber
  & + \gamma\, W_d\!\big(P(\cdot|s,a),\,P(\cdot|s',a)\big) \Big),
\end{align}
where $W_d$ denotes the $1$-Wasserstein distance between transition distributions with ground cost $d$.
Under standard conditions, $\mathcal{T}_{\mathrm{bis}}$ is a $\gamma$-contraction in the $\ell_\infty$ sense and hence admits a unique fixed point, which defines the bisimulation pseudometric.

DBC~\cite{zhang2021deep} builds on this fixed-point view, but uses an on-policy counterpart rather than directly optimizing the max-action operator in Eq.~\eqref{eq:bisim_operator}.
Concretely, the action-wise maximization is replaced by reward and transition distributions induced by the current policy, yielding a policy-conditioned target of the form
\[
(\mathcal{T}_{\mathrm{bis}} d)(s,s') \coloneqq
|r^\pi(s)-r^\pi(s')|
+ \gamma\,W_d\!\big(P^\pi(\cdot|s),P^\pi(\cdot|s')\big).
\]
As the policy improves online, the resulting fixed point is interpreted as a $\pi^*$-bisimulation metric.
DBC then learns an encoder $\phi$ such that a \emph{prescribed} latent dissimilarity
(e.g., $\|\phi(s)-\phi(s')\|_1$) aligns with this on-policy bisimulation target.

\smallskip
\noindent\textbf{Independently-coupled policy distances (MICo \& SimSR).}
For a fixed policy $\pi$, a common relaxation of the bisimulation operator replaces the optimal-transport
coupling (as in $W_d$) with an \emph{independent} coupling of next states.
MICo~\cite{castro2021mico} adopts this choice and defines the update operator $\mathcal{T}^{\pi}_{\mathrm{M}}$
on $U:\mathcal{S}\times\mathcal{S}\to\mathbb{R}_{\ge 0}$:
\begin{align} \label{eq:mico_operator}
  (\mathcal{T}^{\pi}_{\mathrm{M}} U)(s,s') \coloneqq |r^\pi(s) - &r^\pi(s')| \\ \nonumber
  & + \gamma\,\mathbb{E}_{\substack{s_+\sim P^\pi(\cdot|s)\\ s'_+\sim P^\pi(\cdot|s')}}\!\big[U(s_+,s'_+)\big],
\end{align}
where $r^\pi(s):=\mathbb{E}_{a\sim\pi(\cdot|s)}[r(s,a)]$ and
$P^\pi(\cdot|s):=\mathbb{E}_{a\sim\pi(\cdot|s)}[P(\cdot|s,a)]$.
The operator $\mathcal{T}^{\pi}_{\mathrm{M}}$ is a $\gamma$-contraction and thus admits a unique fixed point
$U^\pi$, referred to as the MICo distance.

The same independently-coupled Bellman backbone can also be instantiated by specifying how the
``next-step discrepancy'' is measured, which yields practical objectives for representation learning.
SimSR~\cite{zang2022simsr}, in particular, measures next-step discrepancy in the learned latent space via a
cosine-based dissimilarity and (optionally) a latent dynamics model.
Concretely, given an encoder $\phi$ and a latent dynamics model $P^\pi_{\phi}(\cdot\,|\,\phi(s))$ over latent
next states, SimSR considers the operator
\begin{align} \label{eq:simsr_operator}
  (\mathcal{T}^{\pi}_{\cos} U)(s,s') \coloneqq |r^\pi(s) &- r^\pi(s')| \\ \nonumber 
  &+ \gamma \mathbb{E}_{\substack{z_+\sim P^\pi_{\phi}(\cdot|\phi(s))\\ z'_+\sim P^\pi_{\phi}(\cdot|\phi(s'))}}
  \big[\delta_{\cos}(z_+,z'_+)\big],
\end{align}
where $\delta_{\cos}(z,z') := 1 - \frac{\langle z,z'\rangle}{\|z\|_2\|z'\|_2}$.
When $\phi$ and the latent dynamics are specified, the corresponding operator admits a fixed point,
providing a Bellman-style behavioral objective aligned with the independently-coupled recursion above.

\paragraph{Latent dissimilarity and embedding viewpoint.}
Throughout, we measure representation similarity via a \emph{latent dissimilarity} $\delta:\mathbb{R}^n\times\mathbb{R}^n\to\mathbb{R}_{\ge 0}$ applied to $z=\phi(s)$. The structural assumptions we impose on $\delta$ are symmetry, nonnegativity, and self-zero: $\delta(z,z)=0$ for all $z$. In particular, $\delta$ need not be a true metric and may itself be a pseudometric. Given a target behavioral pseudometric $d$ on $\mathcal{S}$ and a choice of $\delta$, we view representation learning as finding an encoder $\phi$ such that $\delta(\phi(s),\phi(s'))$ approximates $d(s,s')$ with small distortion. Since $d$ is a pseudometric, it induces an equivalence relation $s\sim_d s' \Leftrightarrow d(s,s')=0$ and the corresponding quotient space $\mathcal{S}/\!\sim_d$. The associated \emph{reduced distance} $\bar d$ on $\mathcal{S}/\!\sim_d$ is well-defined by $\bar d([s],[s']) := d(s,s')$. Accordingly, the embedding objective is fundamentally about preserving geometry on the quotient: ideally, $\phi$ should identify each $0$-distance class (so that $s\sim_d s'$ implies $\delta(\phi(s),\phi(s'))\approx 0$) while remaining discriminative across distinct classes. This quotient-aware viewpoint clarifies why the choice (or adaptation) of $\delta$ can directly affect embeddability in finite-dimensional latent spaces, even when the target behavioral distance $d$ is fixed.

\section{Architecture} \label{sec:architecture}
We propose a plug-in latent distance module\textemdash{}a pairwise-conditioned positive definite (PD) quadratic-form distance\textemdash{}that directly replaces the baseline distance in bisimulation-based representation learning.
\subsection{Necessity of well-constrained distance architecture}
In bisimulation-based representation learning, the latent distance is supervised only through a
scalar fixed-point target. A fixed global norm (e.g., $\ell_1$ or $\ell_2$) is simple but overly rigid, imposing the same isotropic notion of similarity across all regions and directions of the latent space. At the other extreme, an unconstrained pairwise scalar parameterization (e.g., an MLP on $[z;z']$) is too flexible and can satisfy the objective by direct regression on latent pairs, without inducing a structured distance that depends on relative differences between latents.
\begin{figure}[t]
  \vskip 0.2in
  \includegraphics[width=82mm]{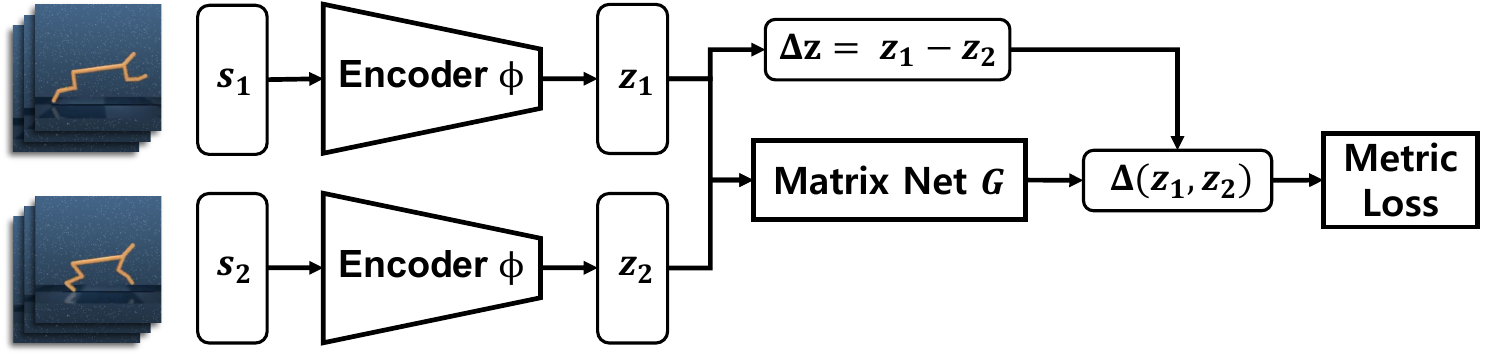}
  \caption{Architecture summary.} \label{fig:arch}
\end{figure}
We therefore adopt a pairwise-conditioned PD quadratic-form distance defined on the displacement $\Delta \coloneqq z - z'$,
which guarantees nonnegativity and symmetry by construction, while allowing adaptive, input-dependent anisotropy.

\subsection{Pairwise-Conditioned PD Metric Network}
We parameterize a pairwise-conditioned PD matrix with a lightweight MLP (\texttt{MetricNet}). Given $z,z'\in\mathbb{R}^p$, the network takes $[z;z']\in\mathbb{R}^{2p}$ and outputs $k=\frac{p(p+1)}{2}$ scalars, which fill the lower-triangular entries of $L_\theta(z,z')\in\mathbb{R}^{p\times p}$ with network parameters $\theta$. From the Cholesky decomposition, we construct
\begin{equation}
  G_0(z,z') \;=\; L_\theta(z,z')\,L_\theta(z,z')^\top,
\end{equation}
which is PD by construction, and enforce symmetry via
\begin{equation} \label{eq:symmetrize}
  G_\theta(z,z') \;=\; G_0(z,z') + G_0(z',z).
\end{equation}
To avoid singularities, we add a small ridge
\begin{equation} \label{eq:ridge}
  \bar G_\theta(z,z') \;=\; G_\theta(z,z') + \lambda I,
\end{equation}
where $\lambda>0$ is a small constant. We apply this quadratic form to $\Delta=z-z'$ to obtain a latent distance.

\paragraph{Induced distance through trace normalization.}
Since the distance can be satisfied up to an arbitrary global scale, we fix this degree of freedom
via per-pair trace normalization:
\begin{equation}
  \tilde G_\theta(z,z') \;\coloneqq\; \frac{\bar G_\theta(z,z')}{\tr\!\big(\bar G_\theta(z,z')\big) + \varepsilon},
\label{eq:trace_norm}
\end{equation}
where $\varepsilon>0$ is a small constant added for numerical stability.
We then define PAMD as follows:
\begin{equation} \label{eq:plugin_distance}
  d_\theta(z,z') \;=\; \sqrt{\Delta^\top\,\tilde G_\theta(z,z')\,\Delta + \varepsilon}.
\end{equation}
This distance is a drop-in replacement for the latent distance used by prior bisimulation-based objectives. The following lemma characterizes its spectral properties showing that $d_\theta(z,z') \lesssim \|\Delta\|_2$ holds, and $d_\theta(z,z') \gtrsim \|\Delta\|_2$ under a mild bounded-trace condition.
\begin{lemma}[Spectral bounds under trace normalization] \label{lem:spectral_sandwich}
  Fix $\lambda>0$ and $\varepsilon>0$. Define $G_\theta(z,z')$ and $\bar G_\theta(z,z')$ as in~\eqref{eq:symmetrize} and~\eqref{eq:ridge} respectively,
  and define $\tilde G_\theta(z,z')$ as in~\eqref{eq:trace_norm}.
  Then for any $\Delta\in\mathbb{R}^p$,
  \begin{align} \label{eq:quad_sandwich_pairwise}
    \underbrace{\frac{\lambda}{\tr(G_\theta(z,z')) + p\lambda + \varepsilon}}_{=:\,\alpha(z,z')}\,\|\Delta\|_2^2
    &\le \Delta^\top \tilde G_\theta(z,z')\,\Delta \\ \nonumber
    &\le \|\Delta\|_2^2.
  \end{align}
  Consequently, the induced distance \eqref{eq:plugin_distance} satisfies
  \begin{equation}
  \label{eq:dist_sandwich_pairwise}
  \sqrt{\alpha(z,z')\,\|\Delta\|_2^2+\varepsilon}
  \;\le\;
  d_\theta(z,z')
  \;\le\;
  \sqrt{\|\Delta\|_2^2+\varepsilon}.
  \end{equation}
  Moreover, if $\tr(G_\theta(z,z'))\le T$ holds uniformly over the pairs of interest, then
  $\alpha(z,z')\ge \alpha := \frac{\lambda}{T+p\lambda+\varepsilon}$ is a uniform lower bound.
\end{lemma}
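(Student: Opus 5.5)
The plan is to control the spectrum of $\tilde G_\theta(z,z')$ by Rayleigh-quotient bounds, then pass to the distance by monotonicity of $x\mapsto\sqrt{x+\varepsilon}$. First I record the structure of $G_\theta$. Since $G_0(z,z')=L_\theta L_\theta^\top$ is positive semidefinite (and PD if the Cholesky diagonal is nonzero), and likewise $G_0(z',z)$, their sum $G_\theta(z,z')$ is symmetric positive semidefinite. Hence $\bar G_\theta=G_\theta+\lambda I$ is symmetric with eigenvalues $\mu_i+\lambda$, where $\mu_1,\dots,\mu_p\ge 0$ are the eigenvalues of $G_\theta$. In particular,
\[
\tr(\bar G_\theta)=\tr(G_\theta)+p\lambda=\sum_i(\mu_i+\lambda).
\]

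For the upper bound in \eqref{eq:quad_sandwich_pairwise}, I use $\Delta^\top\bar G_\theta\Delta\le\lambda_{\max}(\bar G_\theta)\|\Delta\|_2^2$. Because every eigenvalue is nonnegative, the largest one is at most their sum, so $\lambda_{\max}(\bar G_\theta)\le\tr(\bar G_\theta)$. Dividing by $\tr(\bar G_\theta)+\varepsilon>\tr(\bar G_\theta)$ gives $\Delta^\top\tilde G_\theta\Delta\le\|\Delta\|_2^2$. For the lower bound, I use $\Delta^\top\bar G_\theta\Delta\ge\lambda_{\min}(\bar G_\theta)\|\Delta\|_2^2\ge\lambda\|\Delta\|_2^2$, since $\mu_{\min}\ge0$. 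Dividing by $\tr(G_\theta)+p\lambda+\varepsilon$ yields exactly $\alpha(z,z')\|\Delta\|_2^2$.

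The distance sandwich \eqref{eq:dist_sandwich_pairwise} then follows by adding $\varepsilon$ to each side and taking square roots, which preserves the inequalities because both maps are monotone. For the uniform statement, $\alpha(z,z')$ is decreasing in $\tr(G_\theta)$, so $\tr(G_\theta)\le T$ immediately gives $\alpha(z,z')\ge\lambda/(T+p\lambda+\varepsilon)$.

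The only real point requiring care is the positive semidefiniteness of $G_\theta$. Nonnegativity of the eigenvalues is used twice: to bound $\lambda_{\max}$ by the trace, and to bound $\lambda_{\min}(\bar G_\theta)$ below by $\lambda$. That property comes from the Gram form $LL^\top$ and closure of the PSD cone under addition, so I expect no genuine obstacle. Everything else is routine eigenvalue bookkeeping.
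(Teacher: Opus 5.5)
Your proposal is correct and follows essentially the same route as the paper: bound $\lambda_{\max}(\bar G_\theta)$ above by $\tr(\bar G_\theta)$ and $\lambda_{\min}(\bar G_\theta)$ below by $\lambda$, rescale by $\tr(\bar G_\theta)+\varepsilon$, apply the Rayleigh-quotient bounds, and pass to $d_\theta$ using monotonicity of $x\mapsto\sqrt{x+\varepsilon}$. Your observation that only positive semidefiniteness of $G_\theta$ is needed is slightly more careful than the paper. The Gram form $L_\theta L_\theta^\top$ guarantees only PSD, whereas the paper asserts $G_\theta\succ 0$ and $\bar G\succ\lambda I$; the non-strict $\succeq$ is what actually holds, and it suffices.
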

\begin{proof}
  Fix $(z,z')$ and write $\bar G:=\bar G_\theta(z,z')$, $\tilde G:=\tilde G_\theta(z,z')$. Since $G_\theta\succ 0$ and $\lambda I\succ 0$, we have $\bar G\succ 0$ and hence $\tilde G\succ 0$.

  For any PD matrix $A$, $\lambda_{\max}(A)\le \tr(A)$. Thus
  \[
  \lambda_{\max}(\tilde G)
  =\frac{\lambda_{\max}(\bar G)}{\tr(\bar G)+\varepsilon}
  \le \frac{\tr(\bar G)}{\tr(\bar G)+\varepsilon}
  \le 1.
  \]
  Therefore $\Delta^\top \tilde G \Delta \le \lambda_{\max}(\tilde G)\|\Delta\|_2^2 \le \|\Delta\|_2^2$.

  Because $\bar G = G_\theta + \lambda I \succ \lambda I$, we have $\lambda_{\min}(\bar G)\ge \lambda$.
  Hence
  \[
  \lambda_{\min}(\tilde G)
  =\frac{\lambda_{\min}(\bar G)}{\tr(\bar G)+\varepsilon}
  \ge \frac{\lambda}{\tr(\bar G)+\varepsilon}.
  \]
  Using $\tr(\bar G)=\tr(G_\theta)+p\lambda$ gives
  \[
  \lambda_{\min}(\tilde G)\ge \frac{\lambda}{\tr(G_\theta)+p\lambda+\varepsilon}.
  \]
  Finally, $\Delta^\top \tilde G \Delta \ge \lambda_{\min}(\tilde G)\|\Delta\|_2^2$ yields \eqref{eq:quad_sandwich_pairwise},
  and \eqref{eq:dist_sandwich_pairwise} follows by adding $\varepsilon$ inside the square root as in \eqref{eq:plugin_distance}. The uniform bound is immediate if $\tr(G_\theta)\le T$.
\end{proof}
\begin{remark}[Preservation of theoretical properties]
  Since we do not modify the definition of the target behavioral distance (nor its defining operator), theoretical guarantees stated purely in terms of that distance remain applicable. In particular, fixed-point characterizations and value-function relations expressed in terms of the behavioral distance carry over unchanged. A concise mapping to representative results for multiple bisimulation operators is provided in Table~\ref{tab:preserved_properties} in Appendix~\ref{app:mapping_prior_results}.
\end{remark}
\paragraph{Metric loss.}
We recall the generic expression~\eqref{eq:generic_behavioral_distance} of behavioral distances as fixed points, which is instantiated in~\eqref{eq:mico_operator} and~\eqref{eq:simsr_operator} for MICo and SimSR respectively. We write the metric loss used to train the distance module as
\begin{align} \label{eq:metric_loss}
  \mathcal{L}_{\mathrm{metric}}(\theta, \; &\omega) = \\
  &\mathbb{E}_{(s,s')\sim B} \Big[\big(d_\theta(\phi_\omega(s),\phi_\omega(s')) - \mathcal{T}d_\theta(s,s')\big)^2\Big], \nonumber
\end{align}
where $B\sim\mathcal{D}$ is a mini-batch of state pairs sampled from the replay buffer or dataset $\mathcal{D}$, and $\mathcal{T}$ is the operator which defines the target behavioral distance.
\section{Experiments} \label{sec:experiments}
We validate the efficacy of PAMD as a drop-in replacement for the latent comparator used in bisimulation-based visual RL on pixel-based DeepMind Control Suite(DMC) tasks ~\cite{tassa2018dmcontrol}. We consider two representative operator families: (i) standard bisimulation operators which defines the bisimulation metric~\eqref{eq:bisim_operator} in DBC~\cite{zhang2021deep}, and (ii) independently-coupled operators shared by MICo~\eqref{eq:mico_operator} and SimSR~\eqref{eq:simsr_operator}. Unless stated otherwise, we keep each baseline pipeline and hyperparameters fixed and only replace the distance term in the representation objective. For the DBC-family experiments, however, we use a deterministic-transition DBC variant when equipping DBC with PAMD, because the original probabilistic DBC loss uses a closed-form Gaussian Wasserstein term tied to a Euclidean ground metric. We therefore report both probabilistic and deterministic DBC baselines, and treat the DBC-Det vs. DBC-Det+PAMD comparison as the controlled comparison for isolating the effect of PAMD. Implementation details are provided in Appendix~\ref{app:ImplementationDetails}. 

\subsection{Plug-in performance across operator families}
\begin{figure*}[t]
  \vskip 0.2in
  \begin{center}
    \centerline{\includegraphics[width=\textwidth]{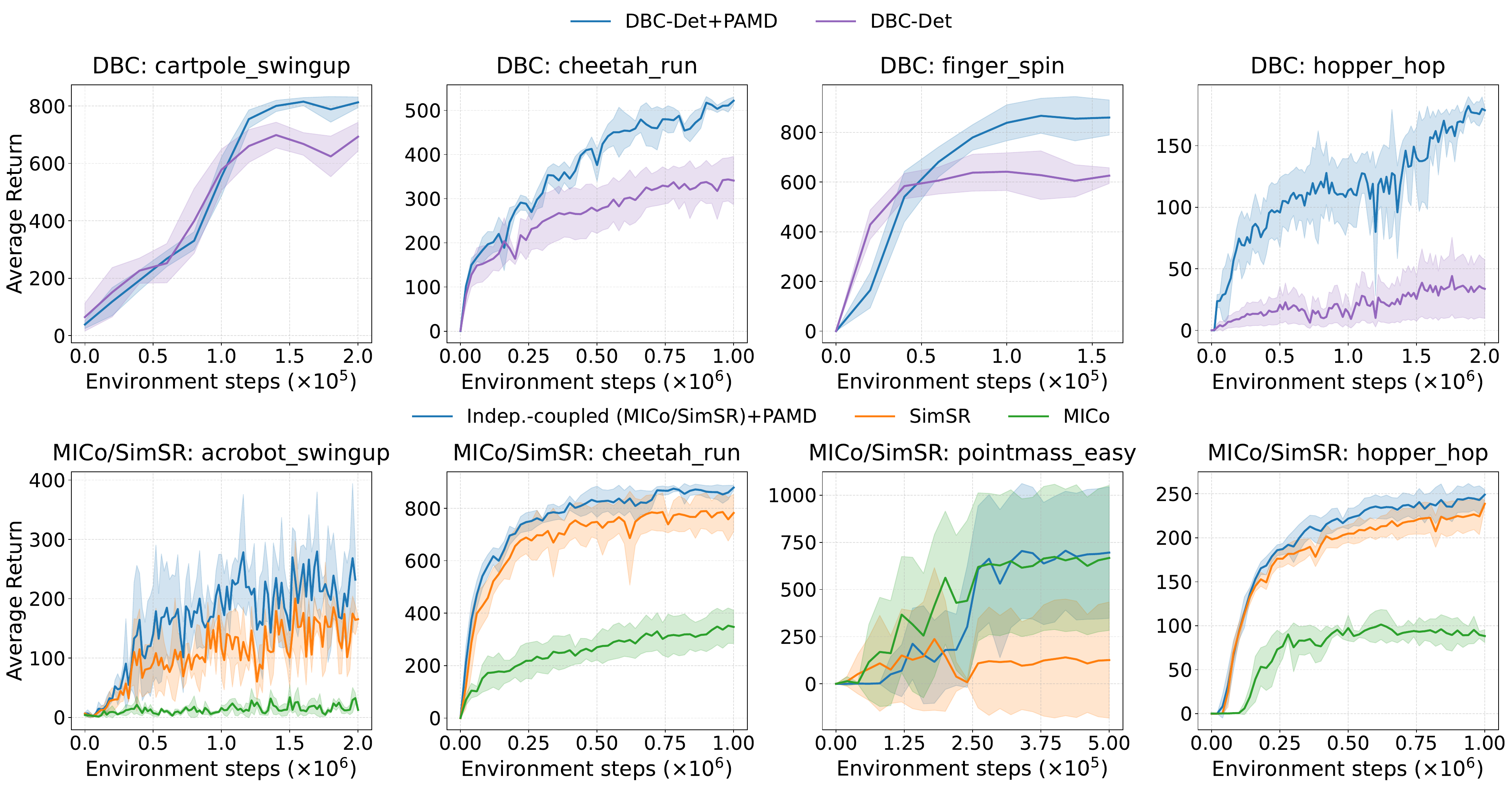}}
    \caption{Results comparing DBC-Det+PAMD to DBC-Det (\textbf{top row}) and the Independently-coupled (MICo/SimSR-style) operator equipped with PAMD to SimSR and MICo baselines (\textbf{bottom row}) across multiple visual MuJoCo tasks. Results show the mean average return over 5 seeds with 1 standard error shaded. For each seed, the average return is computed every 10{,}000 training steps, averaging over 10 episodes. The x-axis denotes the total number of environment interactions, while the y-axis reports the average episodic return.}
    \label{fig:result1}
  \end{center}
\end{figure*}

\paragraph{DBC-style operators.}
We first evaluate PAMD within the DBC family. The original DBC implementation uses a probabilistic Gaussian latent transition model together with a closed-form Gaussian Wasserstein term, whose standard form is tied to a Euclidean ground metric. Directly substituting PAMD into this probabilistic term would therefore change the transition-discrepancy estimator rather than constitute a literal ground-metric drop-in. For this reason, our DBC+PAMD experiments use a deterministic-transition DBC variant.

Table~\ref{tab:dbc_det_comparison} compares probabilistic DBC, deterministic DBC, and deterministic DBC equipped with PAMD, across four pixel-based DeepMind Control Suite tasks
(\texttt{cartpole\_swingup}, \texttt{cheetah\_run}, \texttt{finger\_spin}, and \texttt{hopper\_hop}). The probabilistic and deterministic DBC baselines are broadly comparable in final performance, whereas DBC-Det+PAMD yields substantially higher returns across all evaluated tasks. Thus, for the DBC experiments, the controlled metric-replacement comparison is DBC-Det vs. DBC-Det+PAMD, with probabilistic DBC reported as a reference baseline.

As shown in the \textbf{top row} of Figure~\ref{fig:result1}, the performance gains are pronounced on all tasks, with higher final returns, indicating that the choice of latent distance alone can significantly impact downstream control performance even under an otherwise identical DBC pipeline.
\begin{table}[t]
\centering
\resizebox{\columnwidth}{!}{%
\begin{tabular}{lccc}
\toprule
\textbf{Environment (steps)} & \textbf{DBC (Prob.)} & \textbf{DBC-Det} & \textbf{DBC-Det+PAMD} \\
\midrule
Cartpole Swingup (200k) & $696 \pm 28$ & $710 \pm 55$ & $\mathbf{818 \pm 15}$ \\
Cheetah Run (1M)        & $310 \pm 32$ & $350 \pm 60$ & $\mathbf{524 \pm 14}$ \\
Finger Spin (150k)      & $577 \pm 95$ & $630 \pm 29$ & $\mathbf{860 \pm 81}$ \\
Hopper Hop (2M)         & $67 \pm 16$  & $33 \pm 29$  & $\mathbf{181 \pm 17}$ \\
\bottomrule
\end{tabular}%
} 
\caption{Performance comparison of probabilistic DBC, deterministic DBC, and deterministic DBC equipped with PAMD on image-based DMControl tasks. Results are averaged over 5 random seeds. Bold values indicate the best mean performance for each task.}
\label{tab:dbc_det_comparison}
\end{table}

\paragraph{Independently-coupled (MICo/SimSR-style) operators.}
We next evaluate PAMD under the independently-coupled operator family shared by MICo and SimSR.
Before applying our method, we observe that the relative performance between SimSR and MICo
depends on the task: SimSR outperforms MICo on \texttt{acrobot\_swingup}, \texttt{cheetah\_run},
and \texttt{hopper\_hop}, while MICo performs better on \texttt{pointmass\_easy}.
This highlights that differences in the latent comparator alone already induce
nontrivial performance variation within the same operator family.

Under this independently-coupled operator, replacing the baseline latent comparator with PAMD consistently improves performance across tasks.
As shown in the \textbf{bottom row} of Figure~\ref{fig:result1},
our method yields either clear gains or uniformly stronger performance
relative to both the SimSR and MICo baselines, despite preserving the same fixed-point target and training pipeline.

\paragraph{Ablation on diagonal Mahalanobis variants.}
We next ask whether the full pair-conditioned quadratic form is necessary, or whether the gains of PAMD can be explained simply by learning coordinate-wise rescaling of the latent space. To isolate this effect, we compare PAMD with two intermediate Mahalanobis variants on \texttt{cheetah\_run}: a \emph{global diagonal Mahalanobis} comparator, which learns a single diagonal reweighting shared across all state pairs, and an \emph{adaptive diagonal Mahalanobis} comparator, which predicts pair-dependent diagonal weights but excludes off-diagonal terms. These variants form a natural hierarchy between fixed latent norms and the full PAMD parameterization: fixed norms correspond to a prescribed isotropic geometry, global diagonal Mahalanobis distances allow coordinate-wise anisotropy shared across the dataset, adaptive diagonal Mahalanobis distances allow pair-dependent axis-wise reweighting, and PAMD further permits pair-dependent cross-coordinate interactions through a dense positive-definite matrix. As shown in Figure~\ref{fig:cheetah_ablation}, the diagonal variants can improve over fixed comparators in some settings, but they do not recover the performance of the full PAMD model in either the DBC or independently-coupled operator families. This suggests that the benefit of PAMD is not merely due to adaptive axis-wise scaling; the off-diagonal structure provided by the full Cholesky parameterization offers an additional useful inductive bias while retaining the displacement-based positive-definite quadratic form.
\begin{figure}[t]
    \centering
    \includegraphics[width=\columnwidth]{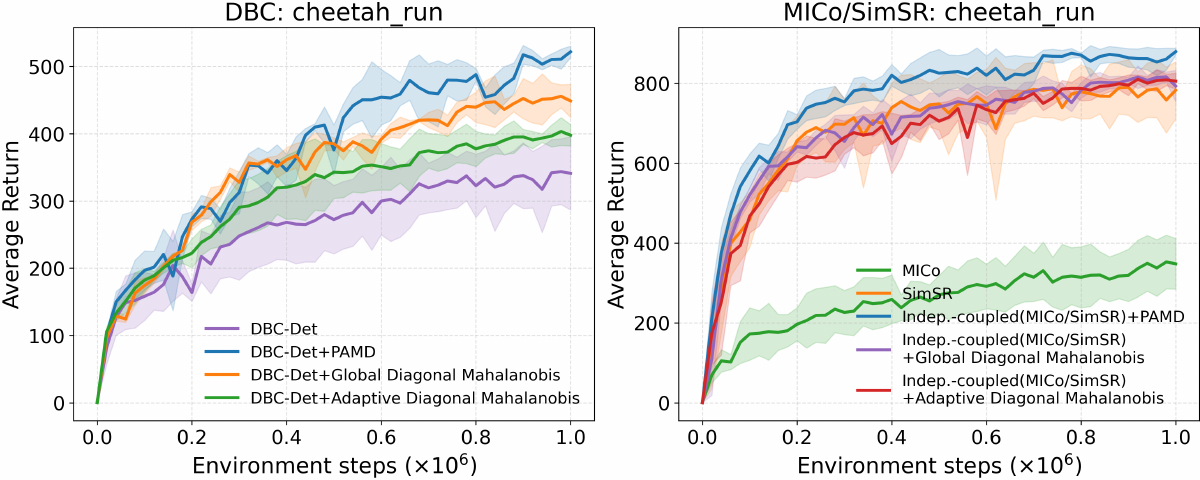}
    \caption{ Ablation results on DMC \texttt{cheetah\_run}. The left and right panels report the DBC and MICo/SimSR families, respectively. Each panel compares PAMD with global and adaptive diagonal Mahalanobis variants. Shaded regions denote $\pm$1 standard error over 5 seeds. }
    \label{fig:cheetah_ablation}
\end{figure}

\paragraph{Ablation on trace normalization.}
We further analyze the role of trace normalization in the proposed distance through a focused ablation study,
reported in Appendix~\ref{app:additional_results}.
We compare variants with and without trace normalization while keeping all other components fixed.
Experiments on DBC (\texttt{finger\_spin}) and SimSR (\texttt{walker\_run}) show that removing trace normalization
consistently degrades performance.
This behavior is consistent with scale degeneracy in the learned quadratic form,
where the fixed-point constraint can be satisfied through uncontrolled scaling rather than meaningful structure.

\paragraph{Experiments with natural-video disturbances.}
To evaluate robustness against task-irrelevant visual information, we replace the default background in DMC \texttt{cheetah\_run} with color-shifted natural-video distractors, as shown in Figure~\ref{fig:cheetah_visual_disturbance}. DBC-Det+PAMD maintains strong performance under this visual disturbance and remains clearly above DBC-Det, whereas the baseline suffers a larger performance drop. These results suggest that PAMD helps the representation focus on behaviorally relevant state information rather than nuisance visual variation.

\begin{figure}[t]
    \centering
    \includegraphics[width=\linewidth]{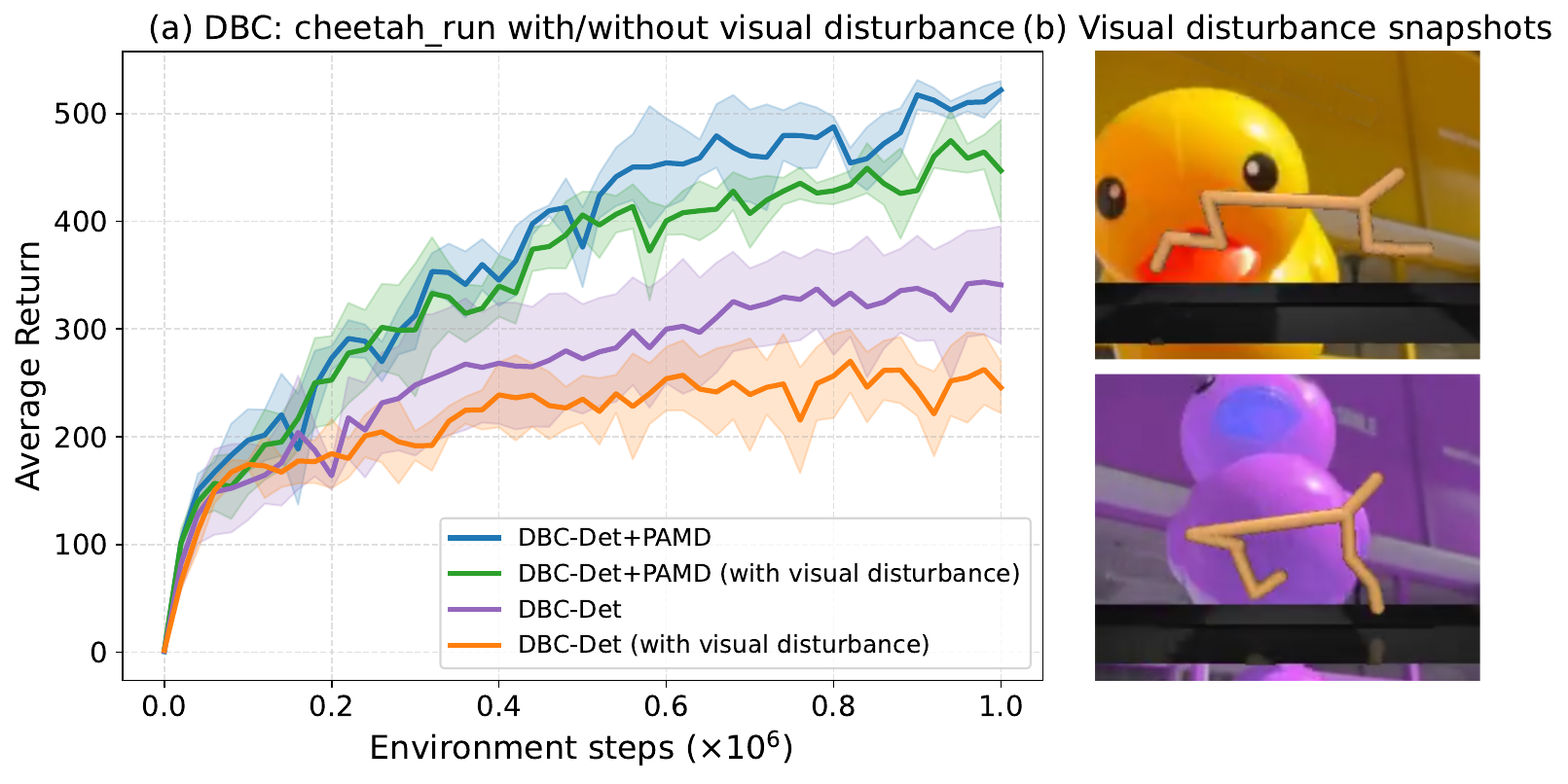}
    \caption{{Natural-video disturbance experiment on DMC \texttt{cheetah\_run}. The default background is replaced with color-shifted natural-video distractors. Curves show the mean return over 5 seeds, with shaded regions denoting $\pm$1 standard error.}}
    \label{fig:cheetah_visual_disturbance}
\end{figure}

\subsection{Why Structure Matters in Pairwise Distances} \label{subsec:structure_matters_pairwise}
A fully learnable pairwise distance $d_\theta$ can improve the metric loss~\eqref{eq:metric_loss} without meaningfully shaping the encoder: it may absorb most of the supervision and satisfy the fixed-point equation~\eqref{eq:generic_behavioral_distance} even when the representation $z=\phi(s)$ is held fixed. This is particularly the failure mode we aim to identify: small metric loss does not necessarily imply a behaviorally organized representation, and in online RL it can correlate with degraded control performance. To directly test whether a distance parameterization requires encoder adaptation, we use an adversarial diagnostic: \emph{residual fitting with a frozen encoder}. If the residual can be driven near-zero while $\phi$ is frozen, then the distance parameterization alone is powerful enough to fit the fixed-point target, and metric loss reduction provides weak pressure to learn an effective encoder. Conversely, if residual reduction is limited under a frozen encoder but improves substantially when $\phi$ is trainable, this indicates that satisfying the fixed-point constraint primarily proceeds via joint representation learning.

We instantiate the SimSR pipeline with either (i) our pairwise-conditioned quadratic-form distance~\eqref{eq:plugin_distance}, or (ii) an \emph{unconstrained} pairwise MLP distance with a comparable parameter budget. All other components and hyperparameters are kept identical.

Figure~\ref{fig:result2} presents:
(\emph{top}) downstream control performance on DMC \texttt{walker\_walk}; and
(\emph{bottom}) the residual-fitting diagnostic based on the independently-coupled one-step fixed-point form,
evaluated offline on a fixed replay buffer.
\begin{figure}[t]
  \vskip 0.10in
  \begin{center}
    \centerline{\includegraphics[width=\columnwidth]{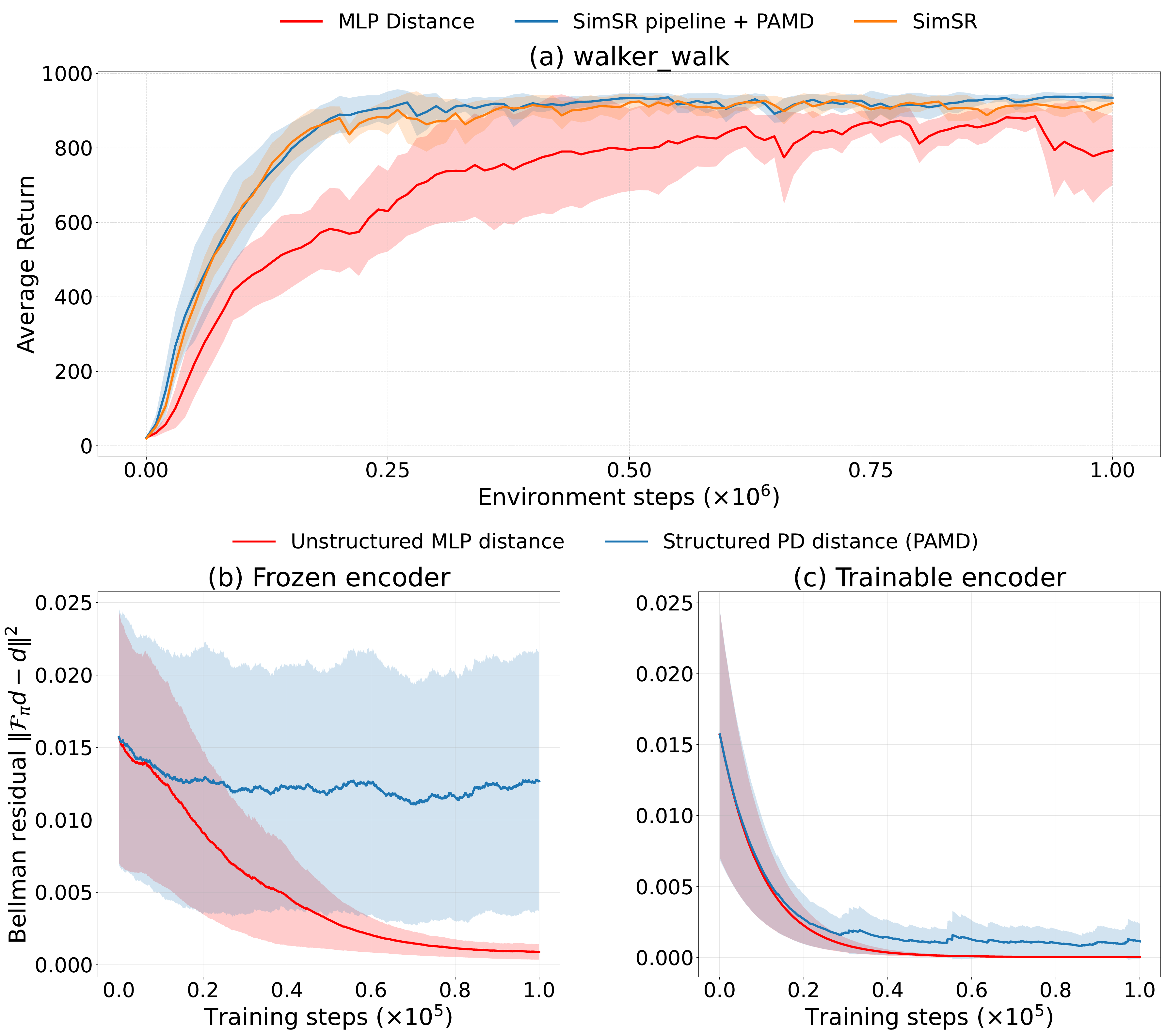}}
    \caption{
    \textbf{Top:} DMC \texttt{walker\_walk} learning curves for SimSR, the SimSR pipeline equipped with PAMD, and SimSR with an unconstrained pairwise-MLP distance (5 seeds; mean $\pm$ 1 s.e.).
    \textbf{Bottom:} Residual-fitting diagnostic (4 seeds) under a \emph{frozen} vs.\ \emph{trainable} encoder, comparing the unconstrained MLP distance and our structured distance.
    }
    \label{fig:result2}
  \end{center}
  \vskip -0.20in
\end{figure}
\paragraph{Residual-fitting diagnostic (offline).}
We first train a SimSR agent on DMC \texttt{walker\_walk} and collect a replay buffer of transitions $(s,r,s^+)$.
Diagnostics are then conducted \emph{offline} on this fixed buffer.
At each update, we sample a minibatch of transitions and form random state pairs within the minibatch.
Let $z=\phi(s)$ and $z'=\phi(s')$ denote latent representations.
We fit the distance module to a one-step fixed-point target using an EMA target network:
\begin{equation} \label{eq:residual_fit}
\begin{aligned}
  \mathcal{L}_{\mathrm{res}}(\theta;\phi) &= \mathbb{E}\!\Big[\big(d_{\theta}(\phi(s),\phi(s'))-\hat d(s,s')\big)^2\Big], \\
  \hat d(s,s') &= \ell(r,r') + \gamma\, d_{\theta^-}(\phi(s^+),\phi(s'^+)),
\end{aligned}
\end{equation}
where $\ell(\cdot,\cdot)$ is a smooth-$\ell_1$ reward distance and $\theta^-$ is a stop-gradient target copy of $\theta$ updated by exponential moving average.
Crucially, we evaluate two settings:
(\emph{i}) \textbf{frozen encoder}: optimize only $\theta$ with $\phi$ fixed;
(\emph{ii}) \textbf{trainable encoder}: optimize $\theta$ and $\phi$ jointly.

\paragraph{Downstream performance (online RL).}
Replacing the latent comparator with an unconstrained pairwise MLP substantially degrades episodic returns on \texttt{walker\_walk},
whereas our structured distance maintains strong performance (Fig.~\ref{fig:result2}, top).
This shows that simply increasing the expressivity of the pairwise distance head is not reliably beneficial in the online RL setting.

\paragraph{Analysis.}
The residual diagnostic explains the above gap. With a \textbf{frozen encoder}, the unconstrained pairwise MLP rapidly reduces $\mathcal{L}_{\mathrm{res}}$ (Fig.~\ref{fig:result2}, bottom), indicating that it can satisfy the one-step fixed-point target largely through distance fitting alone. In contrast, our structured distance yields only limited residual reduction and plateaus above a nonzero level when $\phi$ is frozen, suggesting that it cannot ``absorb'' the supervision without changing the representation. When the encoder is \textbf{trainable}, however, the residual under our structured distance continues to decrease and reaches a level comparable to the unconstrained MLP, indicating that the proposed structure is expressive enough to match the target, but does so primarily via joint adaptation with $\phi$. These results altogether suggest that imposing appropriate structure on the pairwise distance can prevent distance parametrization-dominated solutions and preserve meaningful learning pressure on the encoder, which in turn correlates with improved downstream control.
\section{Conclusion} \label{sec:conclusion}
In this paper, we introduced PAMD: a plug-in pairwise-conditioned positive definite quadratic-form distance structure for bisimulation-based representation learning objectives. We show that PAMD remains comparable to the Euclidean latent distance, and preserves the contraction property of standard behavioral distance operators under fixed-policy MDPs. Empirical validation is conducted on multiple pixel-based continuous-control benchmarks, where PAMD consistently improves downstream control performance when plugged into representative baselines from two major operator families. Finally, an offline residual-fitting diagnostic that contrasts \emph{frozen} versus \emph{trainable} encoders shows that, unlike an unconstrained pairwise MLP distance, PAMD cannot drive the fixed-point residual down by distance fitting alone when the representation is held fixed. Instead, it achieves comparable residual reduction primarily through joint adaptation with the encoder, consistent with its improved downstream control performance.
\section{Limitations} \label{sec:limitations}

A limitation of the current implementation is its dense pair-conditioned quadratic form: although the measured overhead is modest in our DMC setting, constructing a full pair-conditioned positive-definite weight matrix via Cholesky decomposition scales unfavorably with very large latent dimensions. Developing diagonal, low-rank, or structured sparse variants is therefore an important direction for scaling PAMD to higher-dimensional representation spaces.

Another limitation is the scope and protocol matching of our empirical comparisons. Our primary experiments are designed to isolate the effect of replacing the latent comparator inside bisimulation-based visual RL pipelines, and are therefore centered on pixel-based DMControl tasks with fixed baseline pipelines and hyperparameters. The comparison to recent visual RL methods in Table~\ref{tab:recent_visual_rl_comparison} is included only as a reference: those baselines use different training recipes. A fully controlled comparison against stronger recent methods, including TACO, MLR, and DrM, as well as evaluation on broader manipulation-heavy benchmarks such as Meta-World, remains an important direction for future work.

\section{Future Work} \label{sec:future_work}
PAMD is not tied to a single baseline, but it can be used whenever a representation is trained by matching a latent-space dissimilarity to a target notion of behavioral similarity. While we focused on bisimulation-based objectives, an immediate direction is to study how structured, pairwise-conditioned quadratic forms interact with other similarity targets (e.g., value- or advantage-based notions) and other operator choices, and when such structure continues to prevent distance parametrization-dominated fitting while improving downstream performance. More broadly, it would be useful to characterize which classes of behavioral distances are most amenable to such adaptive latent geometries and how the choice of structure trades off expressivity and optimization stability. A complementary theoretical direction is to connect this structured-, adaptive- approach to metric-embedding guarantees by relating the required latent dimension and achievable distortion to properties of the induced quotient space (e.g., covering numbers or effective dimension of $0$-distance classes).
\section*{Acknowledgements}
This work was supported in part by the Institute for Information \& Communications Technology Planning \& Evaluation (IITP) grant funded by the Korea government (MSIT) (No. RS-2019-II190075, Artificial Intelligence Graduate School Program (KAIST)), and in part by the National Research Foundation of Korea (NRF) grant funded by the Korea government (MSIT) (No. RS-2026-25479034).
\section*{Impact Statement}
This work studies fundamental reinforcement learning methods and is primarily methodological in nature. We do not foresee any immediate negative societal impacts arising directly from this research.

\bibliography{references}
\bibliographystyle{icml2026}

%%%%%%%%%%%%%%%%%%%%%%%%%%%%%%%%%%%%%%%%%%%%%%%%%%%%%%%%%%%%%%%%%%%%%%%%%%%%%%%
%%%%%%%%%%%%%%%%%%%%%%%%%%%%%%%%%%%%%%%%%%%%%%%%%%%%%%%%%%%%%%%%%%%%%%%%%%%%%%%
% APPENDIX
%%%%%%%%%%%%%%%%%%%%%%%%%%%%%%%%%%%%%%%%%%%%%%%%%%%%%%%%%%%%%%%%%%%%%%%%%%%%%%%
%%%%%%%%%%%%%%%%%%%%%%%%%%%%%%%%%%%%%%%%%%%%%%%%%%%%%%%%%%%%%%%%%%%%%%%%%%%%%%%
\newpage
\appendix
\onecolumn
\section{Mapping to Prior Theoretical Results} \label{app:mapping_prior_results}
\begin{table}[H]
\centering
\small
\setlength{\tabcolsep}{6pt}
\resizebox{\linewidth}{!}{%
\begin{tabular}{p{0.50\linewidth}cccc}
\toprule
\textbf{Preserved property of the \emph{target} behavioral distance} 
& \textbf{DBC} & \textbf{MICo} & \textbf{SimSR} & \textbf{Ours} \\
\midrule
Existence/uniqueness as a fixed point of a Bellman-style (contractive) operator
& \cmark\ {\small (Thm.~1)}
& \cmark\ {\small (Prop.~4.2, Cor.~4.3)}
& \cmark\ {\small (Thm.~2; same fixed point as MICo)}
& \cmark\ {\small (target distance unchanged)} \\[2pt]

Value-function relation stated \emph{in terms of the target distance}
(e.g., Lipschitz/upper bound on $|V(x)-V(y)|$)
& \cmark\ {\small (cites Ferns'04 Thm.~5)}
& \cmark\ {\small (Prop.~4.8)}
& \cmark\ {\small (Prop.~1; cites MICo)}
& \cmark\ {\small (inherits verbatim)} \\[2pt]

(Pseudo)metric structure of the target distance
(nonnegativity, symmetry, triangle inequality; induces $0$-distance equivalence classes)
& \cmark\ {\small (Thm.~1: fixed point over pseudometrics)}
& \cmark\ {\small (Prop.~4.10)}
& \cmark\ {\small (Thm.~2 + MICo Prop.~4.10)}
& \cmark\ {\small (inherits verbatim)} \\
\bottomrule
\end{tabular}%
}
\caption{\textbf{Preserved theoretical properties.} We modify only the \emph{proxy} latent dissimilarity used to approximate a fixed target behavioral distance; therefore, guarantees stated purely in terms of the target distance (and its defining operator) remain valid.}
\label{tab:preserved_properties}
\end{table}
\section{Geometric Motivation for the Pairwise Adaptive Quadratic Form}
\label{app:geometric_motivation}

A Riemannian metric assigns an inner product to the tangent space at each point
of a manifold, thereby determining how lengths and directions are measured
locally.
This suggests a useful geometric analogy for latent representation learning:
instead of measuring all latent differences with a single fixed norm, one may
allow the geometry of the distance to vary across the representation space.

PAMD follows this motivation only at the level of a local quadratic form. It does not aim to compute geodesic distances or define a full, globally consistent Riemannian manifold structure. Rather, its design is motivated by a local-geometric view of latent representation learning. A fixed latent norm, such as an $\ell_1$ or $\ell_2$ distance, implicitly uses the same geometry at every latent location and along every direction. In visual reinforcement learning, however, the behavioral relevance of latent directions can vary across the representation space: some
directions may mostly encode nuisance visual variation, while others may correspond to changes that are important for rewards or transition behavior.

A useful way to formalize this intuition is to consider a smoothly varying
positive-definite matrix field $H(z) \in \mathbb{S}_{++}^{p}$ on a latent domain
$\mathcal{Z} \subseteq \mathbb{R}^{p}$. Such a field defines the length of a
smooth path $\gamma : [0,1] \to \mathcal{Z}$ by
\begin{equation}
    L_H(\gamma)
    =
    \int_0^1
    \sqrt{
        \dot{\gamma}(t)^\top H(\gamma(t)) \dot{\gamma}(t)
    }
    \, dt ,
\end{equation}
and the corresponding intrinsic distance is
\begin{equation}
    d_H(z,z')
    =
    \inf_{\gamma:\gamma(0)=z,\gamma(1)=z'}
    L_H(\gamma).
\end{equation}
This construction allows the latent geometry to vary across both locations and
directions. However, computing $d_H$ requires solving an optimization problem
over paths and evaluating an integral along the resulting geodesic, which is
impractical inside minibatch Bellman-style representation learning objectives.

For nearby points $z$ and $z'$, a standard local approximation is obtained by
freezing the geometry around a representative point $\bar z$ between $z$ and
$z'$, for example the midpoint. This gives the local quadratic approximation
\begin{equation}
    d_H(z,z')
    \approx
    \sqrt{
        (z-z')^\top H(\bar z) (z-z')
    } .
    \label{eq:local_quad_approx}
\end{equation}
PAMD follows this local-quadratic viewpoint, but replaces the fixed local tensor
$H(\bar z)$ with a learned pair-conditioned positive-definite matrix
$\widetilde G_\theta(z,z')$. In particular, up to the small numerical stabilizer
used in implementation, PAMD measures the displacement
$\Delta = z-z'$ as
\begin{equation}
    d_\theta(z,z')
    =
    \sqrt{
        \Delta^\top \widetilde G_\theta(z,z') \Delta
    } .
\end{equation}
Thus, the learned matrix determines how different latent directions are weighted
for the particular pair being compared.

This interpretation also clarifies the intended scope of the method. Since
$\widetilde G_\theta$ is pair-conditioned rather than defined only as a
single-point tensor field, we do not claim that PAMD induces a globally consistent Riemannian metric tensor or that it computes exact geodesic distances. When the pair-conditioning is implemented symmetrically, this preserves symmetry with respect to the input pair; nevertheless, the resulting quantity is still not
claimed to satisfy global metric properties such as the triangle inequality. Instead, the purpose is to introduce a structured and efficient
local-geometric inductive bias into bisimulation-based representation learning.
Compared with a fixed global norm, PAMD can express pair-dependent anisotropy.
Compared with an unconstrained pairwise MLP distance, it remains tied to the
relative displacement $\Delta$ through a positive-definite quadratic form, which
prevents the distance head from freely absorbing the fixed-point supervision
without shaping the encoder.

From this perspective, the metric-side contribution of PAMD is a practical
middle ground: it brings a local geometric view of adaptive distances into
bisimulation-based visual RL while avoiding the computational burden of full
intrinsic-distance computation.

\section{Additional Experimental Results} \label{app:additional_results}
We present an ablation study analyzing the role of trace normalization in the proposed pairwise-conditioned PD distance. We compare variants \emph{with} and \emph{without} trace normalization, while keeping all other components identical.

Figure~\ref{fig:trace_ablation} reports results on two representative settings. In the left panel, we evaluate DBC and DBC with PAMD on the DMC \texttt{finger\_spin} task. In the right panel, we evaluate SimSR and the SimSR pipeline equipped with PAMD on the DMC \texttt{walker\_run} task.

Across both algorithms, removing trace normalization consistently degrades performance. This behavior is consistent with scale degeneracy in the learned quadratic form, where the distance can satisfy the fixed-point constraint primarily through uncontrolled scaling rather than meaningful geometric structure.

These results support the role of trace normalization as a simple but effective mechanism for stabilizing metric learning and improving downstream control performance.
\begin{figure}[H]
  \centering
  \includegraphics[width=120mm]{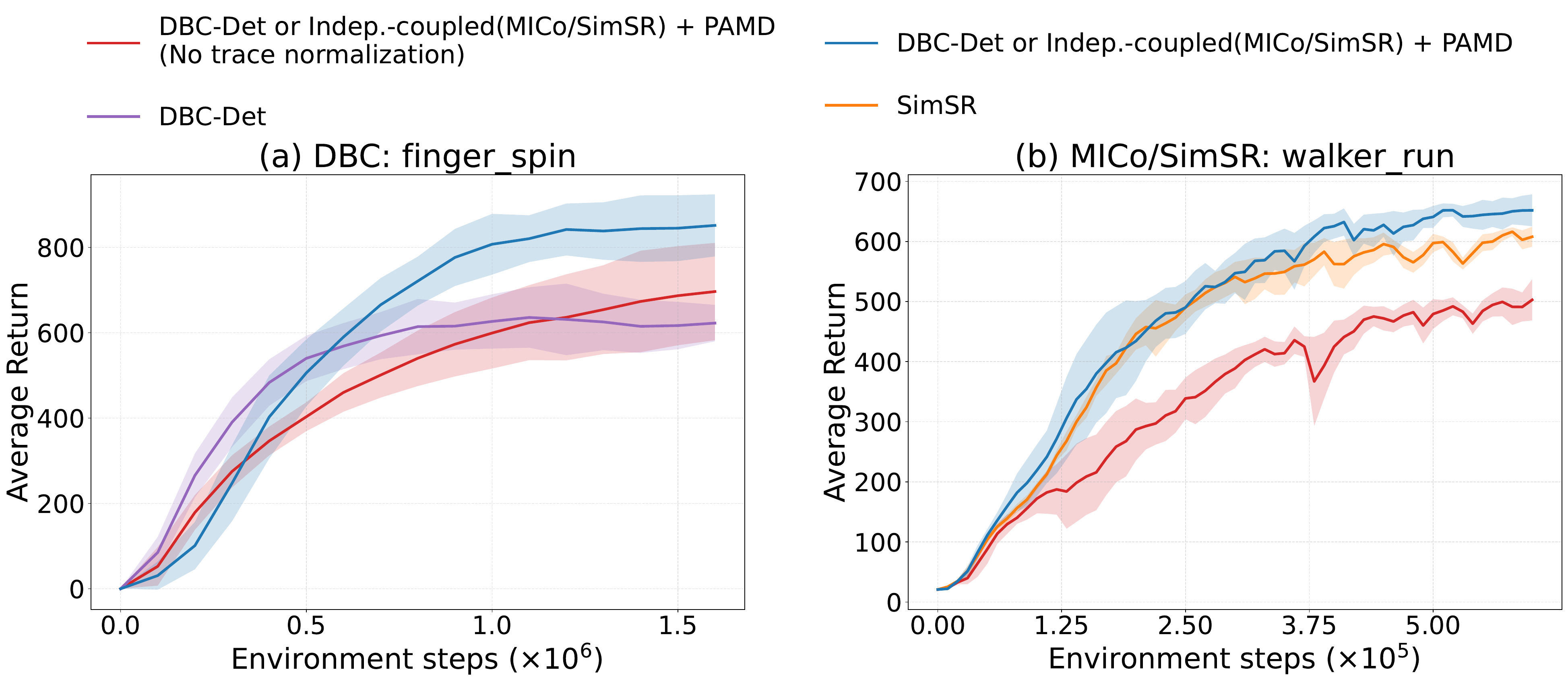}
  \caption{
  \textbf{Trace normalization ablation.}
  \textbf{Left:} DMC \texttt{finger\_spin} learning curves for DBC-Det, DBC-Det+PAMD(with trace normalization),
  and DBC-Det+PAMD(without trace normalization).
  \textbf{Right:} DMC \texttt{walker\_run} learning curves for SimSR, Indep.-coupled+PAMD (with trace normalization),  and Indep.-coupled+PAMD (without trace normalization).
  Curves show mean performance over 5 seeds with 1 standard error shaded.
  }
  \label{fig:trace_ablation}
\end{figure}

\paragraph{Reference comparison to recent visual RL baselines.}
To provide additional context beyond bisimulation-based baselines, we also compare against recent visual reinforcement learning methods on overlapping image-based DMControl tasks. Table~\ref{tab:recent_visual_rl_comparison} reports the corresponding results. This comparison should be interpreted as a contextual reference rather than a fully controlled head-to-head evaluation: PAMD is evaluated as a plug-in module within SAC-based bisimulation pipelines, whereas several compared methods use different training recipes and their reported numbers are taken from prior work. Nevertheless, the comparison shows that the independently-coupled PAMD variant is competitive with stronger recent visual RL baselines on the overlapping tasks, achieving the best reported mean on \texttt{cheetah\_run} and \texttt{walker\_run}, and remaining close to TACO on \texttt{acrobot\_swingup} and \texttt{hopper\_hop}. Fully matched comparisons against recent methods such as TACO, MLR, and DrM under a common protocol remain an important direction for future work.
\begin{table*}[t]
\centering
\resizebox{\textwidth}{!}{%
\begin{tabular}{lcccccccccc}
\toprule
\textbf{Environment} 
& \textbf{\shortstack{Indep.-coupled\\+PAMD}} 
& \textbf{MICo} 
& \textbf{SimSR} 
& \textbf{\shortstack{DBC-Det\\+PAMD}} 
& \textbf{DBC} 
& \textbf{TACO} 
& \textbf{DrQ-v2} 
& \textbf{A-LIX} 
& \textbf{DrQ} 
& \textbf{CURL} \\
\midrule
Acrobot Swingup 
& $\underline{237 \pm 9}$ 
& $15 \pm 9$ 
& $141 \pm 23$ 
& -- 
& -- 
& \cellcolor{gray!15}$\mathbf{241 \pm 21}$ 
& $128 \pm 8$ 
& $112 \pm 23$ 
& $24 \pm 8$ 
& $5 \pm 1$ \\

Cheetah Run 
& \cellcolor{gray!15}$\mathbf{878 \pm 8}$ 
& $347 \pm 63$ 
& $778 \pm 55$ 
& $524 \pm 14$ 
& $350 \pm 49$ 
& $\underline{821 \pm 48}$ 
& $691 \pm 42$ 
& $676 \pm 41$ 
& $474 \pm 32$ 
& $657 \pm 35$ \\

Hopper Hop 
& $\underline{250 \pm 3}$ 
& $78 \pm 17$ 
& $220 \pm 18$ 
& $114 \pm 25$ 
& $15 \pm 9$ 
& \cellcolor{gray!15}$\mathbf{261 \pm 52}$ 
& $189 \pm 35$ 
& $225 \pm 13$ 
& $192 \pm 41$ 
& $152 \pm 34$ \\

Walker Run 
& \cellcolor{gray!15}$\mathbf{652 \pm 24}$ {\scriptsize(500k)} 
& -- 
& $613 \pm 15$ {\scriptsize(500k)} 
& -- 
& -- 
& $\underline{637 \pm 11}$ 
& $517 \pm 43$ 
& $617 \pm 12$ 
& $451 \pm 73$ 
& $387 \pm 24$ \\
\bottomrule
\end{tabular}%
}
\caption{
Reference comparison with recent visual RL baselines on overlapping image-based DMControl tasks. Results for our methods are averaged over 5 random seeds, while the numbers for TACO, DrQ-v2, A-LIX, DrQ, and CURL are taken from prior work and averaged over 6 random seeds. Unless otherwise indicated, results are reported at 1M environment steps, and entries annotated with ``500k'' are evaluated at 500k steps. Bold entries indicate the best mean performance per task, and underlined entries indicate the second-best mean performance.
}
\label{tab:recent_visual_rl_comparison}
\end{table*}
%
% ============================================================
\section{Implementation Details} \label{app:ImplementationDetails}
\subsection{Metric Network Architecture and Construction} \label{app:metricnet}

We parameterize a pairwise-conditioned PD metric with a small MLP, denoted \texttt{MetricNet}, that maps a latent pair $(z,z')$ to a structured matrix $G_\theta(z,z')\succ 0$.

\paragraph{Architecture.}
Let $z,z'\in\mathbb{R}^d$. The network takes the concatenated input $[z;z']\in\mathbb{R}^{2d}$ and outputs $\frac{d(d+1)}{2}$ scalars (the lower-triangular entries of a $d\times d$ matrix):
\[
\mathbb{R}^{2d}\rightarrow 128 \rightarrow 128 \rightarrow \frac{d(d+1)}{2},
\]
using ReLU activations between linear layers.

\paragraph{Positive-definite weight matrix construction via Cholesky decomposition}
The output vector is placed into the lower-triangular part (including the diagonal) of $L_\theta(z,z')\in\mathbb{R}^{d\times d}$, with zeros in the upper triangle. We then form
\[
G_\theta(z,z') \;=\; L_\theta(z,z')\,L_\theta(z,z')^\top,
\]
which is symmetric and positive definite by construction.
\paragraph{Diagonal ReHU constraint (practical stabilization).}
Following the Cholesky convention (positive diagonal for a well-behaved, identifiable factor),
we pass only the diagonal entries of $L_\theta$ through a Rectified Huber (ReHU) nonlinearity with $\eta=1.0$:
\[
\mathrm{ReHU}_\eta(x)=
\begin{cases}
0, & x\le 0,\\
\frac{x^2}{2\eta}, & 0<x<\eta,\\
x-\frac{\eta}{2}, & x\ge \eta.
\end{cases}
\]
This keeps the diagonal nonnegative while remaining smooth near $0$, which empirically reduces
numerical issues when $G$ is later normalized or regularized.
In practice, we add a small constant $\varepsilon > 0$ to the diagonal entries after the ReHU
nonlinearity to ensure strict positive definiteness of $G_\theta$, preventing degenerate cases
with zero eigenvalues during trace normalization or determinant-based regularization.

\paragraph{Input-order symmetry.}
Because the metric should not depend on the order of the pair, we symmetrize:
\[
G_\theta(z,z') \;\leftarrow\; G_\theta(z,z') + G_\theta(z',z).
\]
The resulting matrix is used in the trace-normalized form $\tilde G_\theta$ and the plug-in distance in Eqs.~\eqref{eq:trace_norm}--\eqref{eq:plugin_distance}.

\paragraph{Trace normalization and auxiliary anisotropy term.}
To control the overall scale of the learned pairwise metric, we apply per-pair trace normalization,
\[
\tilde{G}(z,z')
=
\frac{G(z,z')}{\mathrm{tr}(G(z,z'))+\epsilon},
\]
where $\epsilon>0$ is a small constant for numerical stability. This removes the arbitrary scale degree of freedom in the quadratic form, so that the learned matrix primarily determines the relative directional weighting rather than the overall magnitude of the distance.

In addition, we include a small auxiliary term that encourages the normalized metric to deviate from the isotropic metric:
\[
\mathcal{L}_{\mathrm{aux}}
=
-\lambda_{\mathrm{aux}}
\left\lVert
\tilde{G}(z,z')-\tfrac{1}{d}I
\right\rVert_F^2,
\qquad
\lambda_{\mathrm{aux}} = 10^{-3}.
\]
Unlike a conventional isotropic regularizer, this term weakly promotes structured anisotropy in the learned metric. Since $\tilde{G}$ is trace-normalized and positive definite, the term does not encourage unbounded growth of the matrix scale; instead, it biases the metric away from a trivial uniform weighting and toward pair-dependent directional structure.

Finally, in the bisimulation regression objective, the target quantity
\[
\ell(r,r') + \gamma\, d_{\tilde G}(z^+, z^{+\prime})
\]
is treated as a fixed target and detached from the computation graph. As a result, gradients from the residual loss do not propagate through the target branch when fitting the current latent distance, ensuring that the metric and representation are updated by matching a stable Bellman-style target rather than by jointly drifting through the target term.
% ------------------------------------------------------------
\subsection{Computational Overhead and Latent-Dimension Scaling}
\label{app:pamd_compute}

PAMD adds only the pair-conditioned MetricNet branch; the SAC actor--critic, pixel encoder, latent transition model, and other hyperparameters are otherwise kept unchanged. For a minibatch of size $B$, latent dimension $d$, and MetricNet hidden width $h$, the additional branch scales linearly with $B$ and polynomially with $d$. In particular, MetricNet outputs $d(d+1)/2$ lower-triangular entries, and the dense quadratic-form construction introduces higher-order $d$-dependent terms. Under a dense implementation, the dominant asymptotic scaling of the added branch is $O(B(hd^2+d^3))$.

Table~\ref{tab:pamd_flops} summarizes the resulting FLOP count under our default experimental setting, $B=128$, $d=50$, and $h=128$. Under our implementation-level FLOP accounting, the added PAMD branch costs approximately $6.50\times 10^8$ FLOPs per representation update. The shared pixel encoder subtotal is approximately $4.53\times 10^{10}$ FLOPs, so the PAMD branch amounts to about $1.43\%$ of the encoder subtotal. Thus, although PAMD introduces a more expressive metric head, its compute overhead is modest relative to the dominant pixel-encoder computation in our experimental regime.

We also measure end-to-end training throughput while increasing the latent dimension from $d=50$ to $d=200$. As shown in Table~\ref{tab:pamd_wallclock_scaling}, this increase leads to moderate end-to-end slowdown in the tested regime. Across the PAMD variants, total wall-clock time increases by $35.9\%$ for DBC-Det+PAMD and by $7.9\%$ for the independently-coupled PAMD variant; across all reported methods, the total-time slowdown ranges from approximately $1.05\times$ to $1.36\times$. These results indicate that PAMD is not asymptotically free, but that its practical overhead remains manageable for the pixel-based visual-control setting considered in this work.

\begin{table}[t]
\centering
\begin{tabular}{lcc}
\toprule
\textbf{Component} & \textbf{FLOPs / update} & \textbf{Relative to encoder} \\
\midrule
Shared pixel encoder & $4.53\times 10^{10}$ & $100\%$ \\
PAMD MetricNet branch & $6.50\times 10^{8}$ & $1.43\%$ \\
\bottomrule
\end{tabular}
\caption{Approximate computational overhead of the PAMD MetricNet branch under the default setting $B=128$, $d=50$, and $h=128$. FLOPs are reported per representation update. The relative overhead is computed with respect to the shared pixel-encoder subtotal.}
\label{tab:pamd_flops}
\end{table}

\begin{table*}[t]
\centering
\resizebox{\textwidth}{!}{%
\begin{tabular}{lccc ccc ccc}
\toprule
 & \multicolumn{3}{c}{\textbf{Update time (ms)}} 
 & \multicolumn{3}{c}{\textbf{Env. steps/s}} 
 & \multicolumn{3}{c}{\textbf{Total time (min)}} \\
\cmidrule(lr){2-4} \cmidrule(lr){5-7} \cmidrule(lr){8-10}
\textbf{Method} 
& \textbf{50} & \textbf{200} & \textbf{$\Delta$}
& \textbf{50} & \textbf{200} & \textbf{$\Delta$}
& \textbf{50} & \textbf{200} & \textbf{$\Delta$} \\
\midrule
DBC 
& 28.37 & 33.20 & $17.0\%$
& 31.60 & 27.11 & $-14.2\%$
& 52.8 & 61.5 & $16.5\%$ \\
DBC-Det+PAMD 
& 28.96 & 39.98 & $38.0\%$
& 31.14 & 22.92 & $-26.4\%$
& 53.5 & 72.7 & $35.9\%$ \\
MICo 
& 24.73 & 26.11 & $5.6\%$
& 35.54 & 33.87 & $-4.7\%$
& 46.9 & 49.2 & $4.9\%$ \\
SimSR 
& 30.82 & 32.60 & $5.8\%$
& 29.29 & 27.84 & $-5.0\%$
& 56.9 & 59.9 & $5.2\%$ \\
Indep.-coupled+PAMD 
& 36.11 & 39.19 & $8.5\%$
& 25.43 & 23.57 & $-7.3\%$
& 65.5 & 70.7 & $7.9\%$ \\
\bottomrule
\end{tabular}%
}
\caption{Latent-dimension scaling of update time, environment throughput, and total wall-clock training time.
Columns 50 and 200 denote the latent dimension $d$. For each method, $\Delta$ reports the relative change when increasing $d$ from 50 to 200.}
\label{tab:pamd_wallclock_scaling}
\end{table*}
%

% ------------------------------------------------------------
\subsection{MLP Distance Architecture}
\label{app:mlp}

As an unstructured baseline for pairwise distance learning, we consider a simple multilayer perceptron (MLP) that directly maps a pair of latent representations to a scalar distance value.

\paragraph{Distance network.}
Given two latent vectors $z, z' \in \mathbb{R}^d$, the distance is parameterized as
\[
d_\theta(z, z') = f_\theta\!\left([z; z']\right),
\]
where $[\cdot;\cdot]$ denotes concatenation and $f_\theta$ is a three-layer MLP with ReLU activations. Concretely, the architecture is
\[
\mathbb{R}^{2d}
\;\rightarrow\;
h
\;\rightarrow\;
h
\;\rightarrow\;
1,
\]
where $h$ denotes the hidden dimension. We used h=392 to match the total parameter count of \texttt{MetricNet}(See Appendix~\ref{app:param_match}for details). The network outputs a single scalar without any explicit non-negativity, symmetry, or metric constraints.

\paragraph{Usage in representation learning.}
The MLP distance is trained jointly with the encoder using a bisimulation-style objective. Given a minibatch of latent representations $\{z_i\}_{i=1}^B$ and a random permutation $\pi$, we form paired samples $(z_i, z_{\pi(i)})$. The latent distance is computed as
\[
d_\theta(z_i, z_{\pi(i)}),
\]
and the corresponding target is defined by
\[
\lvert r_i - r_{\pi(i)} \rvert
\;+\;
\gamma \,
d_\theta(\hat z_i^{+}, \hat z_{\pi(i)}^{+}),
\]
where $\hat z^{+}$ denotes the predicted next latent state produced by the learned transition model and $\gamma$ is the discount factor.

The distance network and the encoder are optimized by minimizing a smooth $\ell_1$ regression loss between the predicted latent distance and the bisimulation target:
\[
\mathcal{L}_{\text{MLP}}
=
\mathbb{E}\Big[
\operatorname{smooth\_} \ell_1
\big(
d_\theta(z, z'),
\,
\lvert r - r' \rvert + \gamma d_\theta(\hat z^{+}, \hat z'^{+})
\big)
\Big].
\]

\paragraph{Optimization details.}
The parameters of the distance MLP are optimized jointly with the encoder using Adam~\cite{kingma2015adam}, while the target representations used for next-state prediction are computed using an EMA encoder. No additional regularization or structural constraints are imposed on the distance outputs.

% ------------------------------------------------------------
\subsection{MICo Loss Implementation Details} \label{app:mico-impl}

Our MICo implementation strictly follows the original formulation and is integrated into the SAC critic update without modification to the loss definition.

\paragraph{Distance function.}
The MICo distance between two latent states $s$ and $s'$ is defined as
\[
U_\omega(s,s')
=
\frac{\lVert \phi_\omega(s) \rVert_2^2 + \lVert \phi_\omega(s') \rVert_2^2}{2}
+
\beta\,\theta\!\bigl(\phi_\omega(s), \phi_\omega(s')\bigr),
\]
where $\theta(\cdot,\cdot)$ denotes the angular distance computed via cosine similarity:
\[
\theta(z,z')
=
\arctan2\!\bigl(\sqrt{1-\cos^2(z,z')},\;\cos(z,z')\bigr),
\quad
\cos(z,z')=\frac{\langle z,z'\rangle}{\|z\|_2\|z'\|_2}.
\]
We set $\beta=0.1$ in all experiments, consistent with the original MICo setting.

\paragraph{Target construction.}
Given a randomly permuted pair of states $(s,s')$ sampled from the same minibatch, the learning target is defined as
\[
T^U_\omega(s,s')
=
|r_s - r_{s'}|
+
\gamma\,U_\omega(s^+,{s'}^+),
\]
where $s^+$ and ${s'}^+$ denote the corresponding next states.
The target distance is computed using a separate target encoder to ensure training stability.

\paragraph{Loss and optimization.}
The MICo loss minimizes the squared temporal difference:
\[
\mathcal{L}_{\text{MICo}}
=
\mathbb{E}\Bigl[
\bigl(
U_\omega(s,s')
-
T^U_\omega(s,s')
\bigr)^2
\Bigr],
\]
implemented using a smooth $\ell_1$ loss.
This loss is combined with the SAC critic loss as
\[
\mathcal{L}
=
(1-\alpha_{\text{MICo}})\,\mathcal{L}_{\text{critic}}
+
\alpha_{\text{MICo}}\,\mathcal{L}_{\text{MICo}},
\]
where we set $\alpha_{\text{MICo}} = 1\times10^{-5}$ for all pixel-based experiments.

\paragraph{Target encoder update.}
MICo maintains its own target encoder, separate from the SAC target critic. The target encoder is updated using infrequent hard synchronization every $8{,}000$ environment steps, following the original MICo design. All hyperparameters are kept identical to the original MICo setup.

% ------------------------------------------------------------
%
\subsection{Transition Model Details} \label{app:transition}

We follow the latent dynamics modeling choices of the corresponding baselines and do not introduce additional modifications unless explicitly stated.

\paragraph{Ensemble and probabilistic transition models.}

The original DBC implementation uses a probabilistic Gaussian latent transition model. Its bisimulation target computes the transition discrepancy through a closed-form Gaussian Wasserstein term, whose standard form is tied to a Euclidean ground metric. As a result, directly inserting PAMD into this probabilistic term would not be a literal drop-in replacement of the latent comparator; it would require changing the transition-discrepancy estimator itself.

We therefore evaluate the DBC-side PAMD variant under a deterministic-transition DBC formulation. We refer to the corresponding Euclidean baseline as DBC-Det. In this setting, DBC-Det and DBC-Det+PAMD use the same deterministic latent-transition setup, and differ only in the latent comparator used in the representation objective. To control for the transition-model change, Table~\ref{tab:dbc_det_comparison} reports probabilistic DBC and DBC-Det side by side. Their final performances are broadly comparable, while DBC-Det+PAMD substantially improves over both, suggesting that the observed gains are not explained by the transition-model choice alone.

For MICo, no learned transition model is required, as the algorithm directly relies on transitions sampled from the replay buffer.

For SimSR and the SimSR-pipeline implementation of the independently-coupled PAMD variant, we adopt the same \emph{ensemble probabilistic transition model} as used in the original SimSR implementation. Specifically, the latent transition is modeled as an ensemble of $K$ Gaussian predictors
\[
P_k(\phi(s_{t+1}) \mid \phi(s_t), a_t), \quad k=1,\dots,K,
\]
which share the same architecture but are independently initialized and trained. This ensemble structure is retained unchanged when plugging in PAMD.

\paragraph{Sampling procedure.}
The function \texttt{sample\_prediction} follows the baseline implementations. For ensemble-based models, one transition head is uniformly sampled to generate the next latent state.
For deterministic transitions, the predicted next state is used directly.

\paragraph{Transition loss.}
When a probabilistic or ensemble transition model is used, parameters are trained by minimizing the Gaussian negative log-likelihood:
\[
\mathcal{L}_{\text{trans}}
=
\frac{1}{K}
\sum_{k=1}^K
\left[
\frac{1}{2}\log\sigma_k^2(\phi(s_t))
+
\frac{
\bigl(\phi(s_{t+1}) - \mu_k(\phi(s_t), a_t)\bigr)^2
}{
2\sigma_k^2(\phi(s_t))
}
\right].
\]
Here, $K=1$ corresponds to a single probabilistic transition model, while $K>1$ denotes an ensemble of independently parameterized transition models. This loss is identical to that of the original baselines.

\paragraph{Additional details.}
No auxiliary regularization terms or hyperparameters are introduced beyond those already present in the baseline implementations. Except for the explicitly stated DBC deterministic-transition variant, no auxiliary transition-model modifications are introduced beyond those already present in the corresponding baseline implementations.

% ------------------------------------------------------------
\subsection{Encoder Architectures and Actor-Critic} \label{app:arch}

We follow the exact architectural and optimization settings of the corresponding baselines (DBC, SimSR, and MICo where applicable) and do not modify any component other than the proposed distance function.

\paragraph{Encoder.}
All methods use the same convolutional pixel encoder. Given an image observation, we apply a stack of 4 convolutional layers (\texttt{num\_filters}$=32$, kernel size $3$; first layer stride $2$, remaining layers stride $1$) with ReLU activations, flatten the final feature map, and project it to a $d$-dimensional latent via a linear layer. The resulting latent is post-processed differently across baselines: DBC applies LayerNorm to the projected features, SimSR applies $\ell_2$ normalization, and MICo applies LayerNorm (without $\ell_2$ normalization). Convolutional weights are tied across modules when separate encoders are instantiated (e.g., actor/critic).

\paragraph{Actor and critic.}
The policy and value functions are trained using Soft Actor-Critic (SAC). The actor is parameterized as a Gaussian policy with a state-dependent mean and diagonal covariance, where the log-standard-deviation is squashed and constrained within fixed bounds. The critic consists of two independent $Q$-networks, each implemented as a multilayer perceptron with two hidden layers of width $256$. This corresponds to Soft Actor-Critic~\cite{haarnoja2018soft} with the convolutional encoder architecture following Yarats et al.~\cite{yarats2021improving}.

\paragraph{Optimization and training details.}
We use the same SAC optimization procedure, target networks, and update schedules as in the baselines. The encoder is updated jointly with the critic and representation objectives, and target encoders are maintained using EMA. All learning rates, batch sizes, and other hyperparameters are kept identical to the baseline configurations.

\subsection{Training Schedule and Hyperparameters}
Table~\ref{tab:dbc-mico-simsr-hparams} summarizes the training schedule and hyperparameters used across DBC, MICo, and SimSR. Unless explicitly stated, we follow the default settings of the original implementations and keep all shared hyperparameters identical across methods to ensure a fair comparison. In particular, all methods use the same replay buffer capacity, batch size, discount factor, optimizer, learning rates, target-network update frequencies, and latent dimensionality. Differences across methods are limited to algorithm-specific components, such as the MICo loss coefficient, angular weighting, and target encoder update rule, which are set according to the original MICo design.

Table~\ref{tab:action-repeat} reports the action repeat used for each environment. These values follow standard practice in prior visual control benchmarks and match the settings commonly used in the corresponding baselines. For environments evaluated by multiple methods, we use the same action repeat to avoid confounding performance differences due to temporal abstraction.

\begin{table}[H]
\centering
\begin{tabular}{lccc}
\toprule
\textbf{Parameter} & \textbf{DBC} & \textbf{MICo} & \textbf{SimSR} \\
\midrule
Replay buffer capacity & $10^6$ & $10^6$ & $10^6$ \\
Batch size & $128$ & $128$ & $128$ \\
Discount $\gamma$ & $0.99$ & $0.99$ & $0.99$ \\
Optimizer & Adam & Adam & Adam \\

Critic learning rate & $1e{-}3$ & $1e{-}3$ & $1e{-}3$ \\
Actor learning rate & $1e{-}3$ & $1e{-}3$ & $1e{-}3$ \\
Encoder learning rate & $1e{-}3$ & $1e{-}3$ & $1e{-}3$ \\
Decoder learning rate & $1e{-}3$ & -- & $1e{-}3$ \\
Decoder weight decay & $1e{-}7$ & -- & $1e{-}7$ \\

Temperature learning rate & $1e{-}3$ & $1e{-}3$ & $1e{-}3$ \\
Temperature Adam’s $\beta_1$ & $0.9$ & $0.9$ & $0.9$ \\
Initial temperature & $0.1$ & $0.1$ & $0.1$ \\

Critic target update frequency & $2$ & $2$ & $2$ \\
Critic Q-function soft-update rate $\tau_Q$ & $0.005$ & $0.005$ & $0.005$ \\
Critic encoder soft-update rate $\tau_\phi$ & $0.005$ & $0.005$ & $0.005$ \\

Actor update frequency & 2 & $2$ & $2$ \\
Actor log stddev bounds & $[-10, 2]$ & $[-10, 2]$ & $[-10, 2]$ \\

Latent dimension & $50$ & $50$ & $50$ \\

MICo loss coefficient $\alpha$ & -- & $1e{-}5$ & -- \\
MICo angular weight $\beta$ & -- & $0.1$ & -- \\
MICo target encoder update & -- & hard ($8000$) & -- \\
Metricnet hidden dimension & $128$ & $128$ & $128$ \\

\bottomrule
\end{tabular}
\caption{Comparison of hyperparameters used for DBC, MICo, and SimSR.}
\label{tab:dbc-mico-simsr-hparams}
\end{table}

\begin{table}[H]
\centering
\begin{tabular}{lcc}
\toprule
\textbf{Environment} & \textbf{DBC} & \textbf{Indep.-coupled (MICo/SimSR)} \\
\midrule
cartpole, swingup & $4$ & -- \\
cheetah, run & $4$ & $4$ \\
finger, spin & $2$ & -- \\
hopper, hop & $4$ & $4$ \\
acrobot, swingup & -- & $4$ \\
pointmass, easy & -- & $2$ \\
walker, walk/run & -- & $2$ \\
\bottomrule
\end{tabular}
\caption{Action repeat values used for each environment across DBC and independently-coupled (MICo/SimSR) experiments.}
\label{tab:action-repeat}
\end{table}

% ============================================================
% Appendix: Residual-fitting diagnostic & implementation details
% ============================================================
%
\section{Residual-fitting diagnostic: formulation and implementation} \label{app:Residual-fitting diagnostic}
This appendix details the residual-fitting diagnostic used in Fig.~\ref{fig:result2} (bottom). The goal is \emph{not} to train a control agent, but to test a specific failure mode: when the encoder is held fixed, can the \emph{distance module alone} satisfy the independently-coupled (MICo/SimSR style) fixed-point relation?

\paragraph{Why a regression objective appears.}
In independently-coupled fixed-point objectives, the learned distance is intended to approximate a bisimulation quantity defined as the fixed point of a one-step operator. Under a fixed encoder $\phi$, define the bootstrap operator
\begin{equation} \label{eq:app_mico_operator}
  (\mathcal{T}_{\theta^-} d)(s,s') \;=\; \ell(r,r') \;+\; \gamma\, d_{\theta^-}\!\bigl(\phi(s^+),\phi(s'^+)\bigr),
\end{equation}
where $\ell$ is a reward-distance term (we use smooth-$\ell_1$), $\gamma\in(0,1)$ is the discount, and $\theta^-$ denotes a stop-gradient target copy. A distance that perfectly satisfies the one-step fixed-point relation would obey $d_\theta(\phi(s),\phi(s')) = (\mathcal{T}_{\theta^-} d)(s,s')$ on replay samples. Residual-fitting therefore minimizes the squared \emph{fixed-point residual} (a Bellman-style residual for distances):
\begin{equation} \label{eq:app_residual_fit}
  \mathcal{L}_{\mathrm{res}}(\theta;\phi) = \mathbb{E}\Big[ \big( d_{\theta}(\phi(s),\phi(s')) - \underbrace{\ell(r,r') \;+\; \gamma\, d_{\theta^-}(\phi(s^+),\phi(s'^+))}_{\triangleq\ \text{Target}} \big)^2 \Big].
\end{equation}
This objective isolates how much fixed-point consistency can be matched by the distance parameterization itself,
given a frozen representation.

\subsection{Distance parameterizations used in the residual experiment} \label{app:distance_param}
We compare two \emph{pairwise} distance classes in the residual-fitting diagnostic. Both take a latent pair $(z,z')$ as input; all other components of the setup follow the main paper and the appendix implementation details unchanged.

\subsubsection{Unconstrained pairwise MLP distance} \label{app:mlp_distance}
As the unstructured baseline, we use a scalar regressor on the concatenated pair:
\begin{equation} \label{eq:app_mlp_def}
  d_{\theta}^{\mathrm{MLP}}(z,z') = f_{\theta}\bigl([z;z']\bigr),
\end{equation}
where $f_{\theta}$ is a 2-hidden-layer ReLU MLP with a single scalar output. This parameterization imposes no metric constraints (e.g., nonnegativity or symmetry), and in the frozen-encoder regime it can directly fit the fixed-point target by supervised regression on fixed features.

\subsubsection{Structured PD quadratic-form distance (ours)} \label{app:pd_distance}
Our structured distance uses the same PD quadratic-form construction as in our main method:
\begin{equation} \label{eq:app_pd_def}
  d_{\theta}^{\mathrm{PD}}(z,z') = \sqrt{(z-z')^\top\, \tilde{G}_{\theta}(z,z')\, (z-z')}.
\end{equation}
The pairwise-conditioned PD matrix is formed via $G^{(0)}_{\theta}(z,z')=L_{\theta}(z,z')L_{\theta}(z,z')^\top$ and symmetrized as
\begin{equation} \label{eq:app_symmetrize}
  G_{\theta}(z,z') = G^{(0)}_{\theta}(z,z') + G^{(0)}_{\theta}(z',z),
\end{equation}
followed by trace normalization:
\begin{equation} \label{eq:app_trace_norm}
  \tilde{G}_{\theta}(z,z') = \frac{G_{\theta}(z,z')}{\mathrm{tr}(G_{\theta}(z,z'))\;+\varepsilon}.
\end{equation}
All architectural details (lower-triangular parameterization, diagonal \texttt{rehu}) are identical to the MetricNet construction described earlier; here we only change the distance class for the diagnostic.

\subsection{Parameter-matched fairness in the MLP vs.\ PD comparison} \label{app:param_match}
A common concern in comparing distance parameterizations is capacity mismatch. To isolate the effect of \emph{structure} rather than parameter count, we match the total number of trainable parameters in \texttt{DistanceMLP} and \texttt{MetricNet} as closely as possible.

\paragraph{MetricNet parameter count.}
With latent dimension $d=50$, \texttt{MetricNet} predicts the lower-triangular entries of $L\in\mathbb{R}^{d\times d}$, i.e., $k = \frac{d(d+1)}{2} = \frac{50\cdot 51}{2} = 1275$ outputs. Using hidden width $h_{\mathrm{PD}}=128$ (as in the implementation), the three linear layers have sizes
\[
2d \rightarrow h_{\mathrm{PD}} \rightarrow h_{\mathrm{PD}} \rightarrow k.
\]
This yields a total of $193{,}915$ trainable parameters.

\paragraph{DistanceMLP parameter count.}
\texttt{DistanceMLP} uses two hidden layers with width $h_{\mathrm{MLP}}$ and sizes
\[
2d \rightarrow h_{\mathrm{MLP}} \rightarrow h_{\mathrm{MLP}} \rightarrow 1.
\]
We set $h_{\mathrm{MLP}}=392$ in the code, which yields $194{,}041$ trainable parameters. Therefore, the two distance modules are essentially parameter-matched (difference: $126$ parameters), eliminating the trivial explanation that performance differences come from a large capacity gap.

\paragraph{Other controlled factors.}
Beyond parameter count, we keep the following identical across variants in the residual-fitting diagnostic:
\begin{itemize}
  \item the same replay buffer and pairing strategy;
  \item the same encoder architecture and latent dimension $d=50$;
  \item the same optimizer (Adam) and learning rate;
  \item the same discount $\gamma$ and target-network update rule ($\tau$);
  \item identical seed set and logging schedule.
\end{itemize}

\subsection{Experimental details for the residual-fitting diagnostic} \label{app:protocol}
We summarize the protocol corresponding to the provided implementation.

\paragraph{Encoder and preprocessing.}
We used same \texttt{PixelEncoder} used in the experiments, with 4 convolutional layers (32 filters) followed by a linear projection to $d=50$ and LayerNorm.

\paragraph{Optimization and hyperparameters.}
Each run performs a fixed number of gradient updates (\texttt{steps}) on Eq.~\eqref{eq:app_residual_fit}.
We use Adam, discount $\gamma=0.99$. We report results averaged over 4 seeds, and compare frozen vs.\ trainable encoder settings by toggling whether $\phi$ receives gradients.

\subsection{How to interpret the residual curves alongside control performance} \label{app:interpretation}
The diagnostic is most informative when interpreted jointly with downstream control performance (Fig.~\ref{fig:result2}, top).

\paragraph{Frozen encoder.}
With $\phi$ frozen, any reduction in Eq.~\eqref{eq:app_residual_fit} must come from the distance module. If the residual drops substantially, the fixed-point consistency can be matched without modifying representations, which weakens the gradient incentive for updating $\phi$ in end-to-end training.

\paragraph{Trainable encoder.}
If the residual decreases mainly when $\phi$ is trainable, satisfying the fixed-point relation requires adapting the representation, which is consistent with the intended role of bisimulation objectives.

\paragraph{Connection to the main claim.}
In our experiments, the unconstrained MLP can substantially reduce the residual even when $\phi$ is fixed, while the structured PD distance exhibits meaningful reduction primarily when $\phi$ is trainable. This aligns with the control results: the setting that can ``fit the residual'' without changing representations (MLP) correlates with degraded downstream performance, whereas the structured distance maintains strong control performance while preserving representational learning pressure.

%%%%%%%%%%%%%%%%%%%%%%%%%%%%%%%%%%%%%%%%%%%%%%%%%%%%%%%%%%%%%%%%%%%%%%%%%%%%%%%
%%%%%%%%%%%%%%%%%%%%%%%%%%%%%%%%%%%%%%%%%%%%%%%%%%%%%%%%%%%%%%%%%%%%%%%%%%%%%%%

\end{document}